\setlist{noitemsep}
\newtheorem{theorem}{Theorem}
\newtheorem*{theorem*}{Theorem}
\newtheorem{lemma}[theorem]{Lemma}
\newtheorem{corollary}[theorem]{Corollary}
\theoremstyle{definition}
\newtheorem{definition}[theorem]{Definition}
\theoremstyle{remark}
\newtheorem{example}[theorem]{Example}
\title{\LTLf{} Synthesis Under Unreliable Input (Extended Preprint)\footnote{This is the extended preprint of the conference paper with the same title presented at AAAI 2025.  It includes proof sketches of the main theorems in the text, with full proofs in \Cref{apx:proofs}. Additionally, the examples used are described in more detail in \Cref{apx:sheep,apx:hiker,apx:trap}.
}}
\author {
    Christian Hagemeier\textsuperscript{\rm 1},
    Giuseppe De Giacomo\textsuperscript{\rm 1},
    Mose Y. Vardi\textsuperscript{\rm 2}
}
\newcommand{\LTL}{{\sc ltl}\xspace}
\newcommand{\LTLf}{{\sc ltl}$_f$\xspace}
\newcommand{\LDLf}{{\sc ldl}$_f$\xspace}
\newcommand{\QLTL}{{\sc qltl}\xspace}
\newcommand{\QLTLf}{{\sc qltl}$_f$\xspace}
\newcommand{\U}{\mathsf{U}}
\newcommand{\Always}{\raisebox{-0.27ex}{$\square$}}
\newcommand{\Next}{\raisebox{-0.27ex}{\LARGE$\circ$}}
\newcommand{\Wnext}{\raisebox{-0.27ex}{\LARGE$\bullet$}}
\newcommand{\lUntil}{\mathop{\U}}
\newcommand{\last}{\mathit{last}}
\newcommand{\Eventually}{\Diamond}
\newcommand*{\logeq}{\ratio\Leftrightarrow}
\begin{document}

\maketitle

\begin{abstract}
We study the problem of realizing strategies for an \LTLf{} goal specification while ensuring that at least an \LTLf{} backup specification is satisfied in case of unreliability of certain input variables. We formally define the problem and characterize its worst-case complexity as 2EXPTIME-complete, like standard \LTLf{} synthesis.
Then we devise three different solution techniques:  one based on direct automata manipulation, which is 2EXPTIME, one disregarding unreliable input variables by adopting a belief construction, which is 3EXPTIME, and one leveraging second-order quantified \LTLf{} (\QLTLf), which is 2EXPTIME and allows for a direct encoding into monadic second-order logic, which in turn is worst-case nonelementary. 
We prove their correctness and evaluate them against each other empirically.  
Interestingly, theoretical worst-case bounds do not translate into observed performance; the MSO technique performs best, followed by belief construction and direct automata manipulation. As a byproduct of our study, we provide a general synthesis procedure for arbitrary \QLTLf{} specifications.
\end{abstract}
\begin{links}
\link{Code}{https://github.com/whitemech/ltlf-synth-unrel-input-aaai2025}
\end{links}

\section{Introduction}
One of the key challenges in Artificial Intelligence is to equip intelligent agents with the autonomous capability to deliberate and execute complex courses of action to accomplish desired tasks, see, e.g. the work on reasoning about action \cite{Reiter01} and on planning  \cite{GNT2016}. 

The problem we study is related to \emph{reactive synthesis} in Formal Methods \cite{PnRo89,Finkbeiner16,EhlersLTV17}, which shares deep similarities with planning in fully observable nondeterministic domains (FOND, strong plans \citep{CimattiRT98,geffner2013concise}).
A reactive synthesis setting is characterized by (boolean) variables, partitioned into input and output variables, changing over time. 
We have two entities acting in this setting: the environment that controls the input variables (or fluents, in planning terminology), and the agent who controls the output variables (or actions, in planning terminology). 
Given a specification, the agent has to find a strategy (or plan in planning terms) to choose its outputs to fulfil the specification by suitably reacting to the inputs chosen (possibly adversarially) by the environment.

In Formal Methods, the most common specification formalism is Linear Temporal Logic (\LTL) \cite{Pnueli77}. 
In AI, a finite trace variant of \LTL (\LTLf) is popular \cite{GabbayPSS80,baier2006planning,DegVa13,DR-IJCAI18}. The interest in finite traces is due to the observation that, typically, intelligent agents are not dedicated to a single task (specification) all their lives but are supposed to accomplish one task after another. In this paper, we will focus on \LTLf as a specification language.

Note that in reactive synthesis, at any point, the agent observes the current input and, based on its value (along with previous input values), decides how to react by choosing an appropriate output towards satisfying the specification.
Interestingly, machine-learning techniques are bringing about notable advancements in sensing technologies (i.e., generation of input), as showcased by the success of vision and autonomous driving techniques.
However, machine-learning techniques are typically data-oriented and hence have a statistical nature 
that may generate occasional unreliability of the input produced.
Consider a surgical robot that uses machine-learning models to interpret sensory data and guide its actions with precision during complex medical procedures.
Due to the inherent imprecision of the input models, it may misinterpret a critical piece of data regarding the patient’s anatomy (for instance, the robot might incorrectly identify a blood vessel as a different tissue type due to subtle variances in the imaging data that were not accounted for in the training phase of the AI model). This misinterpretation can lead the robot to make an inaccurate incision, potentially causing unintended harm to the patient.

In this paper, we study how to address such potential unreliability in the values of input variables in reactive synthesis.  
One way to address this unreliability is to disregard the unreliable input completely and not consider it in choosing the next output \cite{BloemC0S19}. This is related to planning/synthesis under partial observability \cite{Rintanen04a,de2016ltlf,Kupferman2000, 2015ehlers_estimator}. 
However, this might be too radical and could drastically reduce the agent's ability to operate, considering that the unreliability we are considering is only occasional. Our objective instead is to ensure that the system maintains functionality and adheres to critical specifications, despite uncertain and unreliable inputs. If the uncertainty is quantifiable, we could rely on probabilities turning to MDPs \cite{BaKG08,geffner2013concise} or Stochastic Games \cite{Kwiatkowska16}. Yet, as stated in a report by the White House, ``software does not necessarily conform neatly to probabilistic distributions, making it difficult to apply statistical models or predictions commonly used in other scientific disciplines''~\citep{WHR2024}. Here we aim at exploring a novel synthesis method to manage the potential unreliability of input variables obtained without relying on probabilities. 

The crux of our approach is not to give up on using input variables that might be unreliable but to complement them with the guarantee that even when they behave badly, some safeguard conditions are maintained.
Specifically, we consider two models simultaneously, a \emph{brave model} where all input variables are considered reliable (as usual in synthesis), and a \emph{cautious one}  where unreliable input is projected out and discarded \cite{de2016ltlf}.
Using these two models, we devise a strategy that simultaneously fulfils the task objectives completely if the input variables behave correctly and maintains certain fallback conditions even if the unreliable input variables behave wrongly. 

Our contributions are the following:
\begin{itemize}
    \item A formalization of \LTLf synthesis under unreliable input, which follows the framework described above.
    \item The computational characterization of the problem in terms of worst-case complexity as 2EXPTIME-complete, as standard \LTLf synthesis.
    \item Three provably correct synthesis techniques, one based on a direct method, one based on a belief construction and one based on solving synthesis for \QLTLf formulas, where \QLTLf, second-order quantified \LTLf, is the finite trace variant of \QLTL \cite{SistlaVW85,CalvaneseGV02}.
    \item The three techniques have different complexities: the direct one is 2EXPTIME, the one based on a belief construction is 3EXPTIME, and one based \QLTLf is 2EXPTIME,  but also admits a direct encoding in monadic second-order logic over finite sequences (MSO), which is non-elementary.
    \item An experimental assessment of the three synthesis techniques. Interestingly, the theoretical worst-case bounds do not translate into observed performance; the MSO technique performs best, followed by belief construction and direct automata manipulation.\footnote{Observe that, following the state-of-the-art, the implementation of all the techniques makes use of the tool MONA \cite{KlaEtAl:Mona}, which is based on first-order/monadic second-order logic over finite sequences, both non-elementary in the worst case.} 
\end{itemize}
As a side result, we present a synthesis technique for arbitrary \QLTLf specifications. 

\section{Preliminaries}\label{sect:prelim}
\begin{definition}\label{def:langltlf}
	The language of \LTLf is defined by  
	\[ \varphi \coloneqq A \mid \lnot \varphi \mid \varphi \wedge \psi \mid \Next \varphi \mid \phi \lUntil  \psi \]
	with $A \in \mathcal P$, where $\mathcal P$ is a set of propositional variables.
\end{definition} 
Other modalities, like Weak Next ($\Wnext \varphi \coloneqq \lnot \Next \lnot \varphi$), Eventually ($\Diamond$), and Always ($\Always$), can be defined.
\begin{definition}[\LTLf{} semantics]
	Given a trace $t~\in~(2^{\mathcal P})^{+}$, the satisfaction relation $t, i \vDash \varphi$ is defined inductively for $1 \leq i \leq \mathsf{length}(t) = \last{}$: 
	\begin{itemize}\itemsep=0pt
		\item $t, i \vDash A $ iff $A \in t(i)$ 
		\item $t, i \vDash \lnot \varphi$ iff $t, i \nvDash \varphi$
		\item $t, i \vDash \varphi  \wedge \psi$ iff $t , i \vDash \varphi$ and $t , i \vDash \psi$
		\item $t, i \vDash \Next \varphi$ iff $i < \last{}$ and $t , i+1 \vDash \varphi$
		\item $t, i \vDash \varphi \, \lUntil \, \psi$ iff for some $j$ with $i \leq j \leq \last{}$ we have $t, j \vDash \psi$ and for all $k$ with $i \leq k < j$ we have $t, k \vDash \varphi$.
	\end{itemize}
\end{definition}
We say that a trace satisfies an \LTLf{}-formula, written as $t \vDash \varphi$, iff $t, 1 \vDash \varphi$.

\textbf{Synthesis under full observability.}
Classical \LTLf{} \textbf{synthesis} refers to a game between an agent and the environment. 
Both control a subset of the variables of an \LTLf{}-formula $\varphi$, which the agent tries to satisfy. 

 An agent's strategy is a function \(\sigma: (2^\mathcal{X})^* \rightarrow 2^\mathcal{Y}\). The strategy \(\sigma\) realizes a formula \(\varphi\) if, for any infinite sequence \(\lambda = (X_0, X_1, \dots) \in (2^{\mathcal{X}})^{\omega}\), there exists a \(k \in \mathbb{N} \) such that the finite trace \( t = ((X_0, Y_0), \dots, (X_k, Y_k))\) satisfies \(\varphi\) (i.e. $t \vDash \varphi$), where \(Y_i = \sigma(X_0, \dots, X_{i-1})\).

\textbf{DFA Games.}
The standard technique for solving \LTLf{} synthesis works by reducing it to solving a reachability game on a DFA~\cite{de2015synthesis}. 
A DFA game is also played between two players, the agent and the environment. They have corresponding sets of boolean variables $\mathcal{X}, \mathcal{Y}$. 
The specification of a game is given by a DFA $\mathcal{G} = (2^{\mathcal X \cup \mathcal Y }, S, s_0, \delta, F)$ where $2^{\mathcal X \cup \mathcal Y}$ is the alphabet, $S$ the set of states, the initial state $s_0$, total transition function $\delta: S \times 2^{\mathcal X \cup \mathcal{Y}} \rightarrow S$ and the set of final states $F \subseteq S$.  A play on a DFA game is a sequence $\rho = ((s_{i, 0}, X_0 \cup Y_0), (s_{i, 1}, X_1 \cup Y_1), \dots ) \in (S \times 2^{\mathcal X \cup \mathcal Y})^{+}$ with $s_{i, j+1} = \delta(s_{i,j}, X_i \cup Y_i)$. Such a play is winning if it ends in a final state. 
We say that a player has a winning strategy in a DFA game if they can choose the variables in a way that guarantees to end up in a final state regardless of how the other player responds.

\section{\LTLf{} Synthesis under Unreliable Input}\label{sect:problem}

Before formalizing the problem, we introduce some additional notation. The projection function \(\mathsf{proj}_{\mathcal{V}}(t)\) removes all variables in \(\mathcal{V}\) from a trace \(t\) over propositional variables \(\mathcal{P}\). The expansion function \(\mathsf{exp}_{\mathcal{V}}(\widehat{t})\) takes a trace \(\widehat{t}\) over variables \(\mathcal{\hat{P}}\) (with \(\mathcal{V} \cap \mathcal{\hat{P}} = \varnothing\)) and returns all traces \(t\) by setting variables in \(\mathcal{V}\) in every possible way at all instants of \(\widehat{t}\). For traces \(t\) and \(t'\), \(t \sim_{-\mathcal{V}} t'\) means \(t \in \mathsf{exp}_{\mathcal{V}}(\mathsf{proj}_{\mathcal{V}}(t'))\).
Slightly abusing notation, we use the same syntax to denote two subsets of $\mathcal P$ being equal up to elements of $\mathcal V$.
We model uncertainty about the environment by assuming the agent cannot rely on the readings of certain environment variables; thus, we require that after any changes in these readings still satisfy a backup condition; this leads to the following formalization:

\begin{definition}[\LTLf{} synthesis under unreliable input]
Given \LTLf{}-formulas $\varphi_m, \varphi_b$ over variables $\mathcal X \uplus \mathcal Y$, called respectively the main and backup specification, and a partitioning $\mathcal X = \mathcal X_{rel} \uplus \mathcal X_{unr}$ of the input variables into reliable and unreliable ones respectively, 
solving 
\emph{\LTLf{} synthesis under unreliable input} amounts to finding
a strategy $\sigma:(2^\mathcal{X})^* \to 2^\mathcal{Y}$ such that for any infinite sequence of variables $\lambda = (X_0, X_1, \ldots) \in  {(2^{\mathcal X})}^{\omega}$ there is an index $k \in \mathbb N$ such that
\begin{enumerate}
\item The finite trace $t = ((X_0, Y_0), \dots, (X_k, Y_k))$ with $Y_i = \sigma(X_0, \dots, X_{i-1})$ satisfies $\varphi_m$, i.e.,  $t \vDash \varphi_m$, 
 \item and every $t'$ with $t' \sim_{- \mathcal X_{unr}} t$ satisfies $\varphi_b$, i.e., $t' \vDash \varphi_b$.
\end{enumerate}
\end{definition}
Our problem extends standard \LTLf{} synthesis by using \(\top\) as a backup formula and \(\mathcal{X}_{unr} = \varnothing\). Since \LTLf{} synthesis is 2EXPTIME-complete \cite{de2015synthesis}, our problem is 2EXPTIME-hard. 
We later show a matching upper bound (see \Cref{thm:directcomplexity}). 
The problem is also related, however distinct, to \LTLf synthesis under partial observability~\citep{de2016ltlf}. 
If the main specification goal is trivial (i.e. setting $\varphi_m \coloneqq \top$), our problem degenerates into \LTLf synthesis under partial observability.

Let us illustrate the problem with some examples, which are designed such that by suitably choosing parameters, realizability can be controlled. The full descriptions and formulas can be found in \Cref{apx:sheep,apx:trap,apx:hiker}, respectively.

\begin{example}[Sheep]
A farmer wants to move her $n$ sheep from the left to the right of a river, always moving two sheep simultaneously (it is inspired by well-known puzzles~\citep{ferry}). However, some pairs of sheep may not like each other. 
If she can correctly observe the animosities, she wants to move all sheep; however, when her intuition about which sheep are compatible is unreliable, only a subset (her favourite sheep) must be moved.

Here, the agent has $n$ output variables, of which she can always activate two to request the environment to move the corresponding sheep. Additionally, if for any potentially forbidden pair of sheep $i,j$, an unreliable input variable $\mathsf{disallow_{i,j}}$ is activated, it indicates that $i$ and $j$ cannot be moved together.  This leads to synthesis for the formula $\varphi_{ag} \wedge (\varphi_{env} \supset \varphi_{goal})$, with different $\varphi_{goal}$ for main and backup. Here, the unreliable inputs are variables $\mathsf{disallow_{i,j}}$ for each pair of sheep that may not like each other. 
\end{example}

\begin{example}[Trap]
A robot is searching for a path from a vertex $v$ of a graph with $n$ vertices to a set of secure vertices. For simplicity of modelling, there are at most 2 outgoing edges from any vertex. However, the environment may control the state of a set of traps, allowing it to divert one of two edges to a different endpoint.
If the robot can correctly sense the state of the traps, it should move to a secure vertex; however, if it cannot correctly observe the traps, the same strategy should guarantee that the (larger) backup safety region is reached. 

We can model this with $t$ unreliable input variables $t_i$ that indicate the state of each trap. Additionally, the environment controls $\lceil \log_2(n) \rceil$ variables for denoting the current position. The main and backup specifications then both have the form $\varphi_{env} \rightarrow \bigvee_{v \in R} \Eventually (pos(v))$, where \( \bigvee_{v \in R} \Eventually (pos(v)) \) ensures that the agent ends up in the corresponding region.
\end{example}

\begin{example}[Hiker]
We model a hiker on a trail of length $n$ in the Alps who wants to eat berries, avoiding poisonous ones that could make her sick. 
Due to the similarity between poisonous and non-poisonous berries, she might consume poisonous ones if her senses are unreliable. 
Fortunately, natural medical herbs along the trail can cure sickness. Her main goal is to eat all non-poisonous berries, and even when her senses are unreliable, she wants to ensure that she is not sick by the end of the trail.

We can model this as an instance of synthesis under unreliable input by having $\mathcal X = \{ berry, poison, sick, herb, eot \}$ and $\mathcal Y = \{ eat, takeMedicine, collectMedicine\}$. We then have an environment specification $\varphi_e$ and an agent main and backup goal. The main specification then is simply \( \varphi_e \supset \Box (\mathsf{berry} \wedge \lnot \mathsf{poison} \supset \mathsf{eat} ) \wedge \Diamond (\mathsf{eot}). \) And the backup specification is $\varphi_e \supset \Diamond (\mathsf{eot} \wedge \lnot \mathsf{sick})$. The only unreliable input variable here is the variable that indicates whether the berry before the hiker (if existent) is poisonous, i.e. $\mathcal X_{unr} = \{\mathsf{poison} \}$.
The environment constraints specify how each of the variables changes; for example, for sickness, we have the following environment constraint (in the style of successor state axioms~\cite{reiter1991frame}):
\( \begin{array}{l}
     \Next sick  \equiv \Next \top \wedge ( \Next eat \wedge  berry \wedge  poison  )  \\
     \qquad{} \vee (sick \wedge \lnot (inbag \wedge takeMedication))
\end{array} \)

In general, the hiker does not have a strategy to solve this under unreliable input, as when there are no herbs along the path, she cannot guarantee fulfilling the backup specification under unreliable input. However, if we force there to be herbal medicine at least at one point of her trail, then the following strategy realises both goals: Eat all berries, collect the medicine when it is on the trail and finally, use it after eating all berries. The backup specification in this setting does not influence realisability but changes which strategies successfully realise the goal, as this blocks simple strategies such as eating all berries that appear non-poisonous and disregarding the medicine.
\end{example}

\section{Technique 1: Direct Automata}\label{sect:proj}

Recall that we can essentially view the problem as the agent having to satisfy two goals in the brave and cautious arena simultaneously. 
This suggests combining an arena for the main goal under full observability with an arena for the backup goal under partial observability. 
We first describe the construction before showing correctness. It has three main ingredients: an automaton recognizing the main formula, one for the backup formula under partial observability and the correct combination of these, the synchronous product.

\begin{definition}[Synchronous Product of DFAs]
Given two DFAs \(\mathcal{G}_i = (2^{\mathcal{X}} \times 2^{\mathcal{Y}}, S_i, s_{0,i}, \delta_i, F_i)\) (for \(i \in \{1, 2\}\), and defined over the same alphabet), we define their synchronous product as \(G_1 \otimes G_2 = (2^{\mathcal{X}} \times 2^{\mathcal{Y}}, S_1 \times S_2, (s_{0,1}, s_{0,2}), \delta', F')\), where \(\delta'((s_1, s_2), \sigma) = (\delta_1(s_1, \sigma), \delta_2(s_2, \sigma))\) and \(F' = F_1 \times F_2\).
\end{definition}
Let us now describe the construction of the automaton for the backup formula under partial observability:
\begin{itemize}
    \item First, we create an NFA for the complement of the formula, i.e. $A_{\lnot \varphi}$.
    \item Next, we existentially abstract over the unreliable inputs $U_1, \dots, U_n$, yielding an NFA $(A_{\lnot \varphi})_{\exists U_1, \dots, U_n}$, which we formally define in \Cref{def:existentialNFA}.
    \item Lastly, we determinize the NFA using subset construction. We then complement the final DFA obtaining our final automaton $\overline{\mathcal D((\mathcal A_{\lnot \varphi})_{\exists U_1, \dots, U_n})}$.
\end{itemize}

Formally, we can define the existential abstraction:
 \begin{definition}\label{def:existentialNFA}
 	Given an NFA $N = (2^{\mathcal X} \times 2^{\mathcal Y}, S, \delta, s_0, F)$, we define the existentially abstracted NFA $N_{\exists U_1, \dots, U_n} =  \left (2^{\mathcal X} \times 2^{\mathcal Y}, S, \delta', s_0, F \right )$ by setting 
\small
\begin{equation*}
\delta'(s, (X, Y)) = \{ s' \mid \exists X'. X \sim_{- U_1, \dots, U_n} X' \wedge s' \in \delta(s, (X', Y)) \}.
\end{equation*}
\normalsize

 \end{definition}
 
\begin{restatable}{theorem}{rsThmCorrectProject}\label{thm:projectcorrect}
    Solving synthesis for the synchronous product of the \LTLf{}-automaton for $\varphi_m$, $A_{\varphi_m}$, and the automaton $\overline{\mathcal D((\mathcal A_{\lnot \varphi})_{\exists U_1, \dots, U_n})}$ solves synthesis under unreliable input.
\end{restatable}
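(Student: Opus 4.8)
The plan is to reduce the theorem to a single language identity for the product automaton and then invoke the standard correspondence between DFA reachability games and \LTLf{} synthesis. Writing $\mathcal G = A_{\varphi_m} \otimes \overline{\mathcal D((\mathcal A_{\lnot \varphi})_{\exists U_1, \dots, U_n})}$ for the product (a DFA, being a synchronous product of DFAs), I would first show that its accepted language is \emph{exactly} the set of finite traces meeting both winning conditions of the problem, namely
\[
t \in L(\mathcal G) \iff \bigl(t \vDash \varphi_m\bigr) \wedge \bigl(\forall t'.\ t' \sim_{-\mathcal X_{unr}} t \Rightarrow t' \vDash \varphi_b\bigr).
\]
Given this, the agent wins the reachability game on $\mathcal G$ iff, against every input sequence $\lambda$, it can force a play into $F' = F_1 \times F_2$ at some step $k$; since reaching $F'$ at step $k$ means both components accept \emph{simultaneously}, this is precisely the existence of the index $k$ demanded by the definition of synthesis under unreliable input. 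The equivalence between winning the DFA reachability game and realizing the associated trace property is exactly the reduction of~\cite{de2015synthesis} instantiated on $\mathcal G$, so this final step is immediate once the language identity is established.

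The core of the argument is the language of the backup automaton. Since $A_{\varphi_m}$ accepts exactly the traces with $t \vDash \varphi_m$, by the synchronous-product definition it suffices to prove that $\overline{\mathcal D((\mathcal A_{\lnot \varphi})_{\exists U_1, \dots, U_n})}$ accepts $t$ iff every $t' \sim_{-\mathcal X_{unr}} t$ satisfies $\varphi_b$. I would argue this from the inside out. The NFA $\mathcal A_{\lnot\varphi}$ accepts exactly the traces with $t' \vDash \lnot\varphi_b$. The key lemma is that transition-level existential abstraction (\Cref{def:existentialNFA}) realizes language-level existential projection: $t$ is accepted by $(\mathcal A_{\lnot\varphi})_{\exists U_1,\dots,U_n}$ iff some $t' \sim_{-\mathcal X_{unr}} t$ is accepted by $\mathcal A_{\lnot\varphi}$. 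This follows by a direct run-correspondence: an accepting run $s_0,\dots,s_{k+1}$ of the abstracted NFA on $t$ witnesses, at each step $i$, some $X_i'$ with $X_i \sim_{-\mathcal X_{unr}} X_i'$ and $s_{i+1} \in \delta(s_i,(X_i',Y_i))$; assembling these $X_i'$ yields a $t' \sim_{-\mathcal X_{unr}} t$ on which the \emph{same} state sequence is accepting in $\mathcal A_{\lnot\varphi}$, and conversely any accepting run of $\mathcal A_{\lnot\varphi}$ on such a $t'$ is accepting for the abstracted NFA on $t$ because $\delta(s_i,(X_i',Y_i)) \subseteq \delta'(s_i,(X_i,Y_i))$. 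Determinization by subset construction preserves the language, and complementing the resulting DFA negates it, so the final automaton accepts $t$ iff it is \emph{not} the case that some $t' \sim_{-\mathcal X_{unr}} t$ satisfies $\lnot\varphi_b$, which by negation is exactly the universal backup condition. This also clarifies the design: because NFAs are not effectively closed under complementation, the universal projection is obtained as the complement of the existential projection of the complement, which forces determinization to precede complementation.

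Combining the two characterizations through the acceptance condition $F' = F_1 \times F_2$ yields the displayed identity for $\mathcal G$, and the game reduction then closes the proof. I expect the main obstacle to be the run-correspondence lemma for existential abstraction, in particular its interaction with the fixed output and reliable-input components: the per-letter relations $X_i \sim_{-\mathcal X_{unr}} X_i'$ must be shown to glue into a single global trace relation $t' \sim_{-\mathcal X_{unr}} t$ of equal length with identical $\mathcal Y$- and $\mathcal X_{rel}$-values, so that the abstracted transitions range over precisely the admissible perturbations of the unreliable inputs and nothing more. As a useful by-product, one observes that $\delta'(s,(X,Y))$ depends only on the reliable part of $X$ (its value is invariant over the $\sim_{-\mathcal X_{unr}}$-class of $X$); hence in $\mathcal G$ the backup component follows the cautious model while the $A_{\varphi_m}$ component follows the brave model, confirming that a single reachability game faithfully enforces both requirements at once.
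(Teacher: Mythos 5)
Your proposal is correct and follows essentially the same route as the paper's own proof: the paper likewise decomposes into (i) the synchronous product accepting exactly the intersection of the component languages (its Theorem~\ref{thm:syncProdAnd}), (ii) a run-correspondence lemma showing transition-level existential abstraction realizes language-level projection (its Lemma~\ref{lem:existentialAbstraction}), (iii) the complement-of-determinization step turning that into the universal backup characterization (its Lemma~\ref{char:complementExistentialAbstraction} and Corollary~\ref{col:characeriseExistential}), and (iv) the standard equivalence between winning the DFA reachability game and realizing the trace property, which the paper spells out in both directions where you cite it. The only cosmetic differences are that you outsource step (iv) to the reduction of \citet{de2015synthesis} and add the (true but not needed) observation that the backup component's transitions are invariant over $\sim_{-\mathcal X_{unr}}$-classes.
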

\begin{proof}
We prove (see \Cref{apx:proofs}) correctness by showing: (1) partitioning the problem into two arenas and combining them via synchronous product is valid; (2) the main arena correctness follows from the correctness of \LTLf{} to automata translation; (3) the projection automaton satisfies the backup condition. 
\end{proof}

\begin{theorem}[]\label{thm:directcomplexity}
    The outlined technique has worst-case complexity of 2EXPTIME.
\end{theorem}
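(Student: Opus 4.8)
The goal is to show that the construction described before Theorem~\ref{thm:projectcorrect} runs in 2EXPTIME. The plan is to track the size of each automaton produced in the pipeline and then invoke the known complexity of solving DFA reachability games on the final synchronous product. I would organize the argument around three independent cost estimates, corresponding to the three ingredients of the construction, and then combine them.

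First I would bound the main-goal automaton $A_{\varphi_m}$. By the standard \LTLf{}-to-DFA translation \cite{de2015synthesis}, a formula of size $m$ yields a DFA whose number of states is doubly exponential in $m$, i.e.\ $2^{2^{O(m)}}$. Second, and this is where I expect the only real subtlety to lie, I would bound the backup automaton $\overline{\mathcal D((\mathcal A_{\lnot \varphi})_{\exists U_1, \dots, U_n})}$. The key observation is that although the construction passes through an existential abstraction followed by a subset-construction determinization, neither of these steps pushes us past a double exponential. Concretely: building the NFA $A_{\lnot\varphi}$ for the negated backup formula is singly exponential in the formula size, giving $2^{O(m)}$ states; the existential abstraction in \Cref{def:existentialNFA} only rewrites the transition relation and leaves the state set $S$ unchanged, so it contributes no blow-up at all; and the subset construction determinizes an NFA with $2^{O(m)}$ states into a DFA with $2^{2^{O(m)}}$ states, with the final complementation merely toggling the accepting set and again costing nothing in state count. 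Thus the backup automaton is also of doubly exponential size. The point worth stressing — and the step I would guard most carefully — is that the abstraction is performed \emph{on the NFA before determinization}, so the two exponentials do not stack into a triple exponential (which is precisely what separates this technique from the 3EXPTIME belief-construction technique).

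Third, I would take the synchronous product of the two doubly exponential DFAs. By the product definition, the state set is $S_1 \times S_2$, so its size is the product of two doubly exponential quantities, which is still $2^{2^{O(m)}}$, i.e.\ doubly exponential. Finally, solving a reachability (DFA) game is polynomial — indeed linear — in the size of the arena, as it amounts to a standard attractor/fixpoint computation over the game graph. Therefore the total running time is polynomial in a doubly exponential quantity, which is $2^{2^{O(m)}}$, establishing the 2EXPTIME bound.

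The main obstacle is purely bookkeeping rather than conceptual: I must verify that each transformation in the pipeline either preserves the state count (abstraction, complementation) or contributes at most one additional exponential (formula-to-NFA, determinization, product), so that no two exponential blow-ups are ever composed in series. Once that accounting is laid out cleanly, the 2EXPTIME upper bound follows immediately, and together with the 2EXPTIME-hardness already noted after the problem definition it confirms 2EXPTIME-completeness.
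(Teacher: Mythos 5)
Your proposal is correct and follows essentially the same route as the paper's proof: both bound the main-goal DFA at doubly exponential size, observe that the backup automaton stays doubly exponential because the existential abstraction is applied to the singly exponential NFA \emph{before} the single determinization step, and note that the synchronous product and the reachability game add only polynomial cost. Your version simply spells out the state-count bookkeeping (and the contrast with the 3EXPTIME belief construction) in more detail than the paper's brief argument.
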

\begin{proof}
    Constructing the DFAs for $\varphi_m$ is 2EXPTIME; the NFA for $\varphi_b$ is EXPTIME. Determinizing the NFA after projection adds another exponential. Synchronous product and reachability game are polynomial.
    \end{proof}
Combined with the 2EXPTIME-hardness of \LTLf{} synthesis, this establishes 2EXPTIME-completeness for synthesis under unreliable input:
\begin{theorem}\label{thm:problemcomplexity}
\LTLf{} synthesis under unreliable input is 2EXPTIME-complete.
\end{theorem}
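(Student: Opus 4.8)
The statement asserts 2EXPTIME-completeness, which splits into a lower bound (hardness) and a matching upper bound. Both ingredients are already essentially in hand from the preceding text, so the plan is simply to assemble them cleanly.

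For the lower bound, the plan is to invoke the reduction already sketched immediately after the problem definition: standard \LTLf{} synthesis is recovered as the special case of synthesis under unreliable input in which we take the backup specification to be $\top$ and set $\mathcal{X}_{unr} = \varnothing$. Under these parameters, condition~(2) of the definition becomes vacuous (every $t'$ trivially satisfies $\top$, and moreover $t' \sim_{-\varnothing} t$ forces $t' = t$), so the problem collapses exactly to realizing $\varphi_m$ in the ordinary sense. Since this embedding is trivially polynomial-time computable, the 2EXPTIME-hardness of classical \LTLf{} synthesis \cite{de2015synthesis} transfers directly, giving 2EXPTIME-hardness of our problem.

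For the upper bound, the plan is to cite the direct automata technique: by \Cref{thm:projectcorrect}, solving the reachability game on the synchronous product of $A_{\varphi_m}$ and $\overline{\mathcal D((\mathcal A_{\lnot \varphi})_{\exists U_1, \dots, U_n})}$ correctly solves synthesis under unreliable input, and by \Cref{thm:directcomplexity} this procedure runs in 2EXPTIME. Hence the problem is in 2EXPTIME. Combining the matching bounds yields 2EXPTIME-completeness, which is the claim.

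I expect no genuine obstacle here, since this theorem is a packaging step: all the real work resides in \Cref{thm:projectcorrect} (correctness of the direct construction) and \Cref{thm:directcomplexity} (its complexity analysis). The only point requiring a moment of care is verifying that the degenerate instance $(\varphi_b = \top,\ \mathcal{X}_{unr} = \varnothing)$ really does reduce \emph{to} and not merely \emph{from} standard synthesis, i.e.\ that the equivalence is exact in both directions so that hardness genuinely transfers; this follows because with $\mathcal{X}_{unr} = \varnothing$ the relation $\sim_{-\mathcal{X}_{unr}}$ is the identity on traces, collapsing condition~(2) entirely.
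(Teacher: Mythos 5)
Your proposal is correct and follows exactly the paper's own route: hardness via the embedding of standard \LTLf{} synthesis (taking $\varphi_b = \top$ and $\mathcal{X}_{unr} = \varnothing$, as noted right after the problem definition), and membership via the direct automata technique, i.e.\ the combination of Theorem~1 (correctness) and Theorem~2 (2EXPTIME upper bound). Your extra observation that $\sim_{-\varnothing}$ is the identity, making condition~(2) vacuous, is a welcome bit of care that the paper leaves implicit.
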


This algorithm can be efficiently implemented in a symbolic synthesis framework. Here, given symbolic automata, the synchronous product simply corresponds to merging the sets of bits representing states and conjunction with the BDD representation for the final states (i.e.\  \citet{de2023a}). 
The construction of the NFA can be efficiently implemented symbolically by translating the negated formula into Pure-Past-LTLf, and thus constructing a DFA for the reverse language of $\lnot \varphi_b$ (as described in \citet{ZhuShufang2019FvSE}). Then the subset construction (for determinization), reversal and existential abstraction can be carried out efficiently on the BDDs. For instance, the subset construction step can be carried out symbolically by introducing one variable for each state of the NFA.
\section{Technique 2: Belief-States}\label{sect:belstate}
\citet{de2016ltlf} also investigate a second technique for generating an automaton to solve the game under partial observability, namely the belief-state construction, that we can basically use as an alternative to constructing an automaton for the backup formula under partial observability, keeping the other steps identical.
\begin{definition}[Belief State DFA Game]
Given a DFA game \(\mathcal{A} = (2^{\mathcal{X} \cup \mathcal{Y}}, S, s_0, F)\) with input variables partitioned into \(\mathcal{X} = \mathcal{X}_{rel} \uplus \mathcal{X}_{unr}\), we define the belief-state game \(\mathcal{G}^{rel}_{\mathcal{A}} = (2^{\mathcal{X} \cup \mathcal{Y}}, \mathcal{B}, B_0, \partial, \mathcal{F})\) as follows: \(\mathcal{B} = 2^{S}\) (the power set of states), \(B_0 = \{s_0\}\) (the initial state lifted to the power set), \(\partial : \mathcal{B} \times 2^{\mathcal{X} \cup \mathcal{Y}} \rightarrow \mathcal{B}\) defined by \scriptsize %
\[
    \partial(B, (X \cup Y)) = \{ s' \mid \exists s \in B \exists X'. X \sim_{-\mathcal{X}_{unr}} X' \wedge \delta(s, (X' \cup Y)) = s' \},
\] \normalsize
and \(\mathcal{F} = 2^F\) (the final states of the game).
\end{definition}
With this definition, we can show correctness and characterize the complexity.
\begin{restatable}{theorem}{thmCorrectBelief}
    Solving synthesis for the synchronous product of the \LTLf{}-automaton for $\varphi_m$, $A_{\varphi_m}$,  and the belief-state automaton $G^{rel}_{\mathcal A_{\varphi_b}}$ solves synthesis under unreliable input.
    \end{restatable}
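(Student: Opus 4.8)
The plan is to show that the belief-state construction $G^{rel}_{\mathcal A_{\varphi_b}}$ is an alternative automaton whose accepted language coincides exactly with the language accepted by the projection automaton $\overline{\mathcal D((\mathcal A_{\lnot \varphi_b})_{\exists U_1, \dots, U_n})}$ used in \Cref{thm:projectcorrect}. Since that theorem already establishes that taking the synchronous product of $A_{\varphi_m}$ with the backup-under-partial-observability automaton solves synthesis under unreliable input, and since the only change here is the internal construction of the backup automaton, it suffices to prove that both backup automata define the same language. I would therefore reduce the present claim to a single language-equivalence lemma, exactly as the paper hints by saying the belief-state game is used ``as an alternative'' while ``keeping the other steps identical.''

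To establish the language equivalence, I would characterize the language of the belief-state automaton directly. The key observation is that a trace $t = ((X_0,Y_0), \dots, (X_k,Y_k))$ is accepted by $G^{rel}_{\mathcal A_{\varphi_b}}$ iff the belief state reached after reading $t$ lies in $\mathcal F = 2^F$, i.e. iff \emph{every} state reachable in $\mathcal A_{\varphi_b}$ under some $\sim_{-\mathcal X_{unr}}$-equivalent input sequence is final. First I would prove by induction on the length of the trace that $B_k = \partial(\dots\partial(B_0, (X_0\cup Y_0))\dots,(X_k\cup Y_k))$ equals exactly the set $\{ \delta^*(s_0, (X'_0\cup Y_0)\cdots(X'_k\cup Y_k)) \mid X'_i \sim_{-\mathcal X_{unr}} X_i \text{ for all } i \}$ of states reachable by running the deterministic $\mathcal A_{\varphi_b}$ on all input-variants of $t$ that agree with $t$ outside $\mathcal X_{unr}$ (note $Y_i$ is fixed, since only unreliable inputs are abstracted). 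The inductive step is immediate from unfolding the definition of $\partial$, since the existential over $X'$ precisely re-introduces one more layer of variants. Given this characterization, $B_k \subseteq F$ holds iff every $t'$ with $t' \sim_{-\mathcal X_{unr}} t$ is accepted by $\mathcal A_{\varphi_b}$, i.e. iff every such $t'$ satisfies $\varphi_b$ --- which is exactly the backup condition from the second clause of the definition of synthesis under unreliable input.

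I would then connect this to the projection automaton. The automaton $\overline{\mathcal D((\mathcal A_{\lnot\varphi_b})_{\exists U_1,\dots,U_n})}$ accepts $t$ iff $t$ is \emph{not} accepted by the existentially abstracted NFA, i.e. iff there is \emph{no} input-variant $t'\sim_{-\mathcal X_{unr}} t$ satisfying $\lnot\varphi_b$; by complementation and duality this is again equivalent to every variant $t'$ satisfying $\varphi_b$. Hence both backup automata accept precisely the set of traces all of whose unreliable-input variants satisfy $\varphi_b$, so their languages coincide, and an agent strategy wins on the synchronous product with $G^{rel}_{\mathcal A_{\varphi_b}}$ iff it wins on the product with the projection automaton. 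Correctness then transfers verbatim from \Cref{thm:projectcorrect}.

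The main obstacle I anticipate is the careful bookkeeping in the inductive characterization of the belief state, specifically ensuring that the agent outputs $Y_i$ are treated as fixed and shared across all variants while only the unreliable inputs are quantified --- the definition of $\partial$ quantifies over $X'$ with $X \sim_{-\mathcal X_{unr}} X'$ but leaves $Y$ untouched, and one must confirm this matches the variant relation $t'\sim_{-\mathcal X_{unr}} t$ used in the problem definition (which likewise fixes the $\mathcal Y$-components and the reliable $\mathcal X$-components). A secondary subtlety is verifying that the two constructions agree on accepting the empty-variant-set case and on traces whose prefixes already diverge, but these are handled uniformly by the ``for all variants'' reading, since a universally quantified condition over variant traces is insensitive to how the belief set is represented.
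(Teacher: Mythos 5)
Your proof is correct and takes essentially the same route as the paper: the paper likewise characterizes the belief-state automaton as accepting exactly those traces all of whose $\sim_{-\mathcal X_{unr}}$-variants satisfy $\varphi_b$ (\Cref{lem:charBelStat}, proved via lemmas tracking plays of the belief-state game), observes that \Cref{col:characeriseExistential} gives the identical characterization for the projection automaton, and then transfers correctness verbatim from \Cref{thm:projectcorrect}. Your single inductive equality $B_k = \{\delta^*(s_0, t') \mid t' \sim_{-\mathcal X_{unr}} t\}$ simply packages the paper's separate play-tracking lemmas into one step.
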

\begin{theorem}
    Solving synthesis with backup using the belief-state construction yields a 3EXPTIME algorithm. 
\end{theorem}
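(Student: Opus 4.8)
The plan is to mirror the stage-by-stage complexity analysis used for the direct technique in \Cref{thm:directcomplexity}, tracking the size blow-up introduced by each construction and identifying precisely where the third exponential enters. The overall algorithm has the same shape as Technique~1 --- build an automaton for the main goal, build an automaton enforcing the backup goal, take their synchronous product, and solve the resulting reachability game --- so the only thing that can change the asymptotic bound is the replacement of the projected-and-determinized backup automaton by the belief-state automaton $\mathcal{G}^{rel}_{\mathcal{A}_{\varphi_b}}$. I expect the entire cost to be dominated by that single step.

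First I would bound the cost of the ingredients. Constructing a DFA $\mathcal{A}_{\varphi_b}$ for the backup formula directly from \LTLf{} is $\mathrm{2EXPTIME}$, so its state set $S$ has size doubly exponential in $|\varphi_b|$, i.e. $|S| = 2^{2^{O(|\varphi_b|)}}$. The belief-state construction then forms $\mathcal{B} = 2^{S}$, so the belief automaton has $2^{|S|} = 2^{2^{2^{O(|\varphi_b|)}}}$ states, which is triply exponential; its transition function $\partial$ is computable pointwise, and the existential abstraction over the unreliable inputs contributes only a factor $2^{|\mathcal{X}_{unr}|}$ per transition, which is absorbed. Next, the DFA $A_{\varphi_m}$ for the main specification is again $\mathrm{2EXPTIME}$, hence doubly exponential in size. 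Taking the synchronous product $A_{\varphi_m} \otimes \mathcal{G}^{rel}_{\mathcal{A}_{\varphi_b}}$ multiplies the two state counts, and since (triply exponential) $\times$ (doubly exponential) remains triply exponential, the arena stays triply exponential. Finally, solving a reachability game is polynomial in the size of the arena (and the alphabet $2^{\mathcal{X}\cup\mathcal{Y}}$ is only singly exponential in the variables), so this last step preserves the triply exponential bound, yielding $\mathrm{3EXPTIME}$ overall.

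The hard part will be arguing the bound is exactly triply exponential and not higher: one must verify that neither the synchronous product nor the game-solving phase introduces a further exponential, which follows because both are polynomial in their inputs once the constituent automata are constructed. The key conceptual point --- worth making explicit for contrast with \Cref{thm:directcomplexity} --- is the source of the extra exponential: the direct technique applies subset construction to a \emph{singly} exponential NFA for $\lnot\varphi_b$ (reaching only doubly exponential), whereas the belief construction applies a powerset operation $2^{S}$ to an already \emph{doubly} exponential DFA $\mathcal{A}_{\varphi_b}$, pushing the backup arena to triply exponential. This single substitution is exactly what separates the $\mathrm{2EXPTIME}$ bound of Technique~1 from the $\mathrm{3EXPTIME}$ bound claimed here.
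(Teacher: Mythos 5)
Your proof is correct and follows essentially the same route as the paper: the backup DFA construction costs 2EXPTIME, the belief-state (powerset) construction adds a third exponential, the main-formula DFA is 2EXPTIME, and the product and reachability game are polynomial in the resulting arena. Your explicit contrast with \Cref{thm:directcomplexity} (subset construction on a singly exponential NFA versus powerset of a doubly exponential DFA) is a correct and useful elaboration, but the underlying accounting is identical to the paper's.
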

\begin{proof}
    This follows from the fact that constructing the belief-state automaton takes 3EXPTIME (construction of the DFA from the formula takes 2EXPTIME, then the belief-state construction costs another exponential); generating the DFA for the main formula takes 2EXPTIME, the other steps are polynomial. 
\end{proof}

The belief state construction can also be implemented symbolically efficiently, similar to the subset construction for the direct approach~\citep{tabajara2020ltlf}.

\section{Quantified \LTLf{} Synthesis}\label{sect:qltlf}
Our third technique builds on translating the problem into synhesis for \QLTLf, which is a variant of \QLTL, that similarly adds second-order quantification over propositional variables to \LTLf.
In this section, we present a general algorithm for  \QLTLf synthesis.
We then cast synthesis under unreliable input as a special case of this problem and show that for such formulas, the general algorithm matches the 2EXPTIME bound.
A \QLTLf{} formula is given by the following grammar ($X$ denotes a second-order variable):\footnote{In \QLTLf and MSO, with a little abuse of notation, we do not distinguish between second-order variables and propositions - open variables act as propositions.}
\[ \varphi \coloneqq X \mid \lnot \varphi \mid \varphi \wedge \psi \mid \Next \varphi \mid \varphi \, \mathcal{U} \, \psi \mid \exists X. \varphi  \]
The satisfaction relation for \QLTLf{} is defined as for \LTLf{}, with an additional clause for the second-order quantifier:
\[ t, i \vDash \exists X. \varphi :\logeq \exists t'. t \sim_{-X} t' \wedge t', i \vDash \varphi \]
Universal quantification in \QLTLf{} is defined as $\forall X. \varphi \coloneqq \lnot \exists X. \lnot \varphi$; other modalities are defined as in \LTLf{}.
More informally, the existential quantifier '$\exists x. \varphi$' in \QLTLf{} states that there is at least a way to modify where in the trace a variable x holds, thereby making the formula $\varphi$ true. 
A formula is in prenex normal form (PNF) if it is of the form $Q_1 X_1. \, Q_2 X_2.\, \dots Q_n X_n. \, \varphi$ where $\varphi$ is an \LTLf{}-formula and $Q_i \in \{ \forall, \exists \}$. Any formula can be polytime-transformed into PNF, similar to \QLTL{} \citep{piribauer2021quantified}.  The alternation count of a formula in PNF is the number of indices $i$ s.t.\  $Q_i \neq Q_{i+1}$.%

\begin{restatable}{theorem}{thmQLTLfkplustwoExpTime}\label{thm:complexitySynthesisQltlf}
       Synthesis for a \QLTLf{}-formula $\psi$ with $k$ alternations can be solved in $(k+2)$-EXPTIME.  
\end{restatable}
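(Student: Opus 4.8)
The plan is to reduce \QLTLf{} synthesis to ordinary \LTLf{} (DFA) synthesis by eliminating the quantifier block one alternation block at a time via automata constructions, each costing one exponential. First I would put $\psi$ into prenex normal form $Q_1 X_1 \dots Q_n X_n.\, \varphi$ with $\varphi$ an \LTLf{}-formula; by the remark preceding the theorem this is polynomial and preserves the alternation count $k$. The key observation is that the inner \LTLf{}-matrix $\varphi$ can be translated into a DFA over the alphabet $2^{\mathcal P}$ where $\mathcal P$ includes all the quantified variables $X_1, \dots, X_n$ together with the free (agent/environment) variables; this first translation costs one exponential (single-exponential in $|\varphi|$ since \LTLf{}-to-DFA is doubly exponential in general, but I will account for this carefully below).

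The heart of the argument is processing the quantifier prefix from the innermost quantifier outward. An existential quantifier $\exists X_i$ corresponds, on automata, to existential projection of the variable $X_i$ out of the alphabet: applied to a DFA this yields an NFA over the reduced alphabet, exactly the existential-abstraction operation already used in \Cref{def:existentialNFA}. A universal quantifier $\forall X_i = \lnot \exists X_i \lnot$ corresponds to complementing, projecting, and complementing again. Projection alone keeps the automaton nondeterministic but does not blow up the state space; the cost comes from the determinization (subset construction) needed whenever we must complement, i.e.\ at each switch from an existential to a universal block or vice versa. Thus one exponential is paid per alternation, not per quantifier: a maximal block of like quantifiers can be handled by a single projection followed by one determinization. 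Carefully batching each block this way gives $k$ determinization steps after the initial automaton, and I would make this precise by an induction on the number of remaining blocks, maintaining the invariant that after processing $j$ blocks we hold a DFA (deterministic, so ready to complement for the next block) whose size is a $(j+1)$-fold exponential in $|\psi|$.

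Counting the exponentials then yields the bound: the \LTLf{}-to-automaton translation of the matrix contributes the first two exponentials (doubly exponential), and each of the $k$ alternation blocks contributes one further exponential through its determinization, giving a tower of height $k+2$; once all quantifiers are removed we are left with an ordinary DFA game, which is solved by a reachability computation in time polynomial in the automaton size, hence absorbed into the top-level exponential. This gives the stated $(k+2)$-EXPTIME. The main obstacle I anticipate is the bookkeeping around \emph{where} the two ``base'' exponentials come from and ensuring they are not double-counted with the per-alternation exponentials: one must argue that the \LTLf{}-matrix translation and the treatment of the innermost block can be combined so that a purely existential (or purely universal) innermost block does not add a spurious third base exponential, and that the determinizations genuinely compose as a height-$(k+2)$ tower rather than multiplying. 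A secondary subtlety is handling free variables correctly under projection, so that the final game is played over exactly the agent and environment variables and the semantics of $\sim_{-X}$ from the quantifier clause matches the automata-theoretic projection; I would verify this equivalence explicitly before reading off the complexity.
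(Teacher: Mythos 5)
Your proposal is correct and follows essentially the same route as the paper's proof: convert to prenex normal form, then induct over the quantifier blocks, handling an existential block by projection (yielding an NFA) plus one determinization and a universal block by complement--project--determinize--complement, so that the matrix-to-NFA translation together with the innermost block accounts for the two base exponentials, each of the $k$ alternations adds exactly one more, and the resulting DFA game is solved in time polynomial in the final automaton. The two subtleties you flag are precisely what the paper addresses: its induction's base case combines the matrix translation with the innermost block to avoid a spurious third exponential, and its \Cref{lem:existentialAbstraction} and \Cref{col:characeriseExistential} establish the correspondence between the quantifier semantics ($\sim_{-X}$) and automata-theoretic projection.
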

\begin{proof}
Induction on alternation count; see \Cref{apx:proofs}. %
Base cases can be handled similarly to the projection construction and thus need at most 2-EXPTIME. The inductive step, essentially repeats this argument.
\end{proof}
\begin{restatable}{theorem}{thmReductionToQltlf}\label{thm:redToQltlf}
	A strategy $\sigma$ realizes the instance of \emph{\LTLf{} synthesis under unreliable input} with $\mathcal{X}_{unr} = \{ U_1, \dots, U_n \}$  iff it realizes synthesis for the \QLTLf{} formula $\varphi_m \wedge \forall U_1. \dots \forall U_n. \varphi_b$.
\end{restatable}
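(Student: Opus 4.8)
The plan is to prove the biconditional by unfolding the semantics of the
\QLTLf{} formula $\varphi_m \wedge \forall U_1. \dots \forall U_n. \varphi_b$ and matching
the two resulting conjuncts to the two clauses of the definition of \LTLf{}
synthesis under unreliable input. The key observation is that realizing synthesis
for a conjunction $\alpha \wedge \beta$ with a \emph{single} strategy $\sigma$ is
equivalent to $\sigma$ realizing $\alpha$ and $\sigma$ realizing $\beta$
simultaneously: the agent must, along every input sequence $\lambda$, reach a
stopping index $k$ at which the produced trace $t = ((X_0,Y_0),\dots,(X_k,Y_k))$
with $Y_i = \sigma(X_0,\dots,X_{i-1})$ satisfies both conjuncts at once. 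So the
whole argument reduces to showing that the first conjunct $\varphi_m$ captures
clause~(1) and the universally quantified conjunct captures clause~(2), at the
\emph{same} index $k$.

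First I would handle the $\varphi_m$ part, which is immediate: the \QLTLf{}
conjunct $\varphi_m$ is a plain \LTLf{} formula, so $t \vDash \varphi_m$ is by
definition exactly clause~(1). The substantive step is the universally quantified
conjunct. Here I would push the semantics of the quantifier prefix through. By
the definition of $\forall X. \varphi \coloneqq \lnot \exists X. \lnot \varphi$
and the given satisfaction clause for $\exists$, one unfolds
$\forall U_1. \dots \forall U_n. \varphi_b$ to obtain that
$t \vDash \forall U_1 \dots \forall U_n. \varphi_b$ holds iff for every trace $t'$
obtained from $t$ by reassigning the variables $U_1,\dots,U_n$ arbitrarily at
every instant, we have $t' \vDash \varphi_b$. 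The point I would make carefully is
that iterating the single-variable expansion relation $\sim_{-X}$ over
$U_1,\dots,U_n$ coincides exactly with the relation $\sim_{-\mathcal{X}_{unr}}$
from the problem definition, since $\mathcal{X}_{unr} = \{U_1,\dots,U_n\}$ and the
expansion function $\mathsf{exp}$ sets the projected-out variables in every
possible way at all instants. Thus the set of traces quantified over is precisely
$\{\, t' \mid t' \sim_{-\mathcal{X}_{unr}} t \,\}$, and the conjunct states exactly
that every such $t'$ satisfies $\varphi_b$, which is clause~(2).

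Putting the two together, $\sigma$ realizes the \QLTLf{} formula iff along every
$\lambda$ there is a $k$ such that the trace $t$ satisfies both $\varphi_m$ and
$\forall U_1 \dots \forall U_n. \varphi_b$, which by the above is iff
$t \vDash \varphi_m$ and every $t' \sim_{-\mathcal{X}_{unr}} t$ satisfies
$\varphi_b$ — i.e.\ iff $\sigma$ solves synthesis under unreliable input. I expect
the main obstacle to be bookkeeping rather than conceptual difficulty: namely,
verifying that the nested single-variable quantifier semantics
($t \sim_{-X} t'$ applied $n$ times, each reassigning one variable at a time)
aggregates to the joint reassignment relation $\sim_{-\mathcal{X}_{unr}}$. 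One
must check that reassigning the $U_i$ one at a time, in sequence, ranges over the
same set of traces as reassigning all of them at once; this is essentially a
commutativity/compositionality property of $\mathsf{exp}$ and $\mathsf{proj}$ over
disjoint variable sets, and I would verify it by an easy induction on $n$, with
the base case $n=1$ being definitional and the inductive step using that the
variables are distinct so their reassignments do not interfere. A subtle point
worth flagging is that the quantifier semantics allows the witness traces to
differ from $t$ at the projected variables \emph{independently at each time
instant}, and one should confirm this matches the instant-wise freedom built into
$\mathsf{exp}_{\mathcal{V}}$; once that alignment is established the equivalence
follows directly.
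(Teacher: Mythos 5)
Your proposal is correct and follows essentially the same route as the paper's proof: unfold the \QLTLf{} semantics of the universal quantifier prefix into quantification over all traces $t' \sim_{-\mathcal{X}_{unr}} t$, and match the two conjuncts of $\varphi_m \wedge \forall U_1. \dots \forall U_n. \varphi_b$ to the two clauses of the problem definition at the same stopping index $k$. If anything, you are slightly more careful than the paper, which takes for granted the aggregation of the $n$ nested single-variable quantifiers into the joint relation $\sim_{-\mathcal{X}_{unr}}$, whereas you correctly flag this as the one step needing a (routine) induction on $n$.
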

\begin{proof}
Follows by \QLTLf{} semantics, see \Cref{apx:proofs}. %
\end{proof}

From counting the alternations in the formula, we can deduce that this technique has optimal worst-case complexity.
\begin{theorem}
    Solving \LTLf{} synthesis under unreliable input by translating into \QLTLf has complexity 2EXPTIME.
\end{theorem}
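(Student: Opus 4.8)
The plan is to combine the two theorems already established in this section, \Cref{thm:redToQltlf} and \Cref{thm:complexitySynthesisQltlf}, and to show that the particular \QLTLf{} formula arising from the reduction falls into a low rung of the alternation hierarchy. By \Cref{thm:redToQltlf}, an instance of \LTLf{} synthesis under unreliable input with $\mathcal{X}_{unr} = \{U_1, \dots, U_n\}$ is equivalent to \QLTLf{} synthesis for the formula $\psi \coloneqq \varphi_m \wedge \forall U_1. \dots \forall U_n. \varphi_b$. The overall strategy is therefore: first put $\psi$ into prenex normal form, then count its quantifier alternations, and finally invoke the $(k+2)$-EXPTIME bound of \Cref{thm:complexitySynthesisQltlf} with the appropriate value of $k$.

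First I would analyze the quantifier structure of $\psi$. The subformula $\varphi_m$ is a plain \LTLf{} formula and hence quantifier-free, while $\forall U_1. \dots \forall U_n. \varphi_b$ contains only a single block of universal quantifiers over a quantifier-free \LTLf{} matrix $\varphi_b$. When we pull the quantifiers to the front to obtain PNF, the conjunction with the quantifier-free $\varphi_m$ introduces no quantifiers of its own, so the resulting prenex formula consists of a single uniform block $\forall U_1 \dots \forall U_n$ followed by a quantifier-free \LTLf{} matrix. The key observation is that all quantifiers are of the same type (universal), so there is no point $i$ at which $Q_i \neq Q_{i+1}$; consequently the alternation count is $k = 0$.

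With $k = 0$ established, the result is immediate: I would simply apply \Cref{thm:complexitySynthesisQltlf}, which gives a $(k+2)$-EXPTIME, i.e.\ $2$-EXPTIME, upper bound for \QLTLf{} synthesis on a formula with $k$ alternations. Since the reduction of \Cref{thm:redToQltlf} is polynomial (PNF conversion is polytime, as noted in the text), the translation itself contributes no additional blowup, and the overall procedure runs in 2EXPTIME.

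The only subtle point — and thus the step I would be most careful about — is verifying that the PNF transformation of $\psi$ genuinely yields zero alternations rather than accidentally introducing an existential quantifier. Because $\varphi_m$ is quantifier-free and sits in a conjunction rather than under a negation, prenexing does not flip any universal quantifier into an existential one, so the single universal block survives intact. I would make this explicit to rule out any hidden alternation, after which the complexity bound follows mechanically from the two cited theorems. (Note that together with the 2EXPTIME-hardness inherited from standard \LTLf{} synthesis, this also re-confirms the 2EXPTIME-completeness of \Cref{thm:problemcomplexity}.)
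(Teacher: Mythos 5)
Your proof is correct and follows exactly the paper's argument: the paper likewise observes that the formula $\varphi_m \wedge \forall U_1. \dots \forall U_n. \varphi_b$ from \Cref{thm:redToQltlf} has alternation count $0$ and then applies the $(k+2)$-EXPTIME bound of \Cref{thm:complexitySynthesisQltlf}. Your additional care in checking that prenexing the conjunction introduces no alternation (since $\varphi_m$ is quantifier-free and not under a negation) is a detail the paper leaves implicit, but it is the same proof.
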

\begin{proof}
    Follows from the formula in Theorem~\ref{thm:redToQltlf} having 0 alternations and \Cref{thm:complexitySynthesisQltlf}. 
\end{proof}

\section{Technique 3: MSO Encoding}\label{sect:mso}

Exploiting Theorem~\ref{thm:redToQltlf} we now propose a third solution technique for \LTLf{} synthesis under unreliable input. 
Specifically, we start from the \QLTLf specification $\varphi_m \wedge \forall U_1. \dots \forall U_n. \varphi_b$ %
translate it into monadic second-order logic (MSO), and then use MONA to obtain the DFA corresponding to the original specification (for synthesis under unreliable input).
Then we can solve the DFA game, just like in standard \LTLf{} synthesis. In fact, this approach also works for synthesis of arbitrary \QLTLf formulas. %

 	 	Formulas of MSO are given by the following grammar ($x,y$ denote first-order variables, and $X$ denotes a second-order variable): 
 	\[ \varphi \coloneqq X(x) \mid x < y  %
  \mid (\varphi \wedge \psi) \mid \lnot \varphi \mid \exists x. \varphi \mid \exists X. \varphi. \]

We then consider \emph{monadic structures} as interpretations that correspond to traces. We use the notation $t, [x/i] \vDash \varphi$ to denote that this interpretation of second-order variables (details in \Cref{apx:proofs}), with assigning $i$ to the FO-variable $x$ satisfies $\varphi$. 
We can then give a translation and show its correctness:
\begin{definition}
We define a translation $\mathsf{mso}$ from \QLTLf{} to MSO by setting (we use standard abbreviations for $succ(x,y)$, $x\leq y$, and $x\leq\last$):
\small
\[
\begin{array}{rcl}
\mathsf{mso}(X, x) &\coloneqq& X(x) \\
\mathsf{mso}(\lnot \varphi, x) &\coloneqq& \lnot \mathsf{mso}(\varphi, x) \\
\mathsf{mso}(\varphi \wedge \psi, x) &\coloneqq& \mathsf{mso}(\varphi, x) \wedge \mathsf{mso}(\psi, x) \\
\mathsf{mso}(\Next \varphi, x) &\coloneqq& \exists y. succ(x, y) \wedge \mathsf{mso}(\varphi, y) \\
\mathsf{mso}(\varphi \lUntil \psi, x) &\coloneqq& \exists y. (x \leq y \leq \last) \wedge \mathsf{mso}(\psi, y) \\
& & \quad \wedge \forall z. (x \leq z < y \rightarrow \mathsf{mso}(\varphi, z)) \\
\mathsf{mso}(\exists X. \varphi, x) &\coloneqq& \exists X. \mathsf{mso}(\varphi, x)
\end{array}
\] 
\normalsize
\end{definition}

 \begin{restatable}{theorem}{thmTranlsationmsocor}
For any closed \QLTLf{} formula \(\varphi\) and finite trace \(t\), \(t, i \vDash \varphi\) iff 
\( t, [x/i] \vDash \mathsf{mso}(\varphi, x) \).

 \end{restatable}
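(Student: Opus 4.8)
The plan is to prove the statement by structural induction on the \QLTLf{} formula $\varphi$, showing the stronger claim that for \emph{every} subformula $\psi$ (closed or open) and every interpretation of the free second-order variables, $t, i \vDash \psi$ iff $t, [x/i] \vDash \mathsf{mso}(\psi, x)$. Strengthening to open formulas is essential because the inductive hypothesis must be applied underneath quantifiers, where subformulas have free variables; the closed case is then recovered as a special instance. The induction requires a careful prior alignment of semantics: a \QLTLf{} second-order variable $X$ is interpreted as the set of positions at which the proposition $X$ holds along the trace $t$, and this must be identified with the second-order assignment that MSO gives to $X(x)$ over the monadic structure corresponding to $t$. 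I would make this correspondence explicit at the outset (referencing the monadic-structure definitions deferred to the appendix), so that the atomic case $\mathsf{mso}(X, x) = X(x)$ holds by definition.

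First I would dispatch the base case: for an atom $X$, $t, i \vDash X$ holds iff $i$ lies in the set assigned to $X$, which is exactly the MSO satisfaction of $X(x)$ under $[x/i]$, by the chosen correspondence. The boolean cases $\lnot\varphi$ and $\varphi \wedge \psi$ are immediate from the inductive hypothesis, since $\mathsf{mso}$ commutes with negation and conjunction and the position $i$ is held fixed. For $\Next \varphi$, I would use that the \LTLf{} semantics requires $i < \last$ together with $t, i+1 \vDash \varphi$, and that $\mathsf{mso}(\Next\varphi, x) = \exists y.\, succ(x,y) \wedge \mathsf{mso}(\varphi, y)$; here the key observation is that $\exists y.\, succ(x,y)$ is satisfiable under $[x/i]$ precisely when $i$ has a successor, i.e. when $i < \last$, and that such a $y$ is unique and equals $i+1$, so the inductive hypothesis applied at position $i+1$ closes the case. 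The Until case is analogous but bookkeeping-heavy: I would match the existential witness $j$ in the \LTLf{} semantics to the quantified $y$ with $x \le y \le \last$, and the universal condition over $k$ with $i \le k < j$ to the $\forall z.\, (x \le z < y \rightarrow \mathsf{mso}(\varphi, z))$ clause, applying the inductive hypothesis once at $y$ for $\psi$ and once at each $z$ for $\varphi$.

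The genuinely delicate case is the quantifier $\exists X. \varphi$, since $\mathsf{mso}(\exists X.\varphi, x) = \exists X.\, \mathsf{mso}(\varphi, x)$ and I must reconcile the two notions of ``modifying where $X$ holds.'' On the \QLTLf{} side, $t, i \vDash \exists X.\varphi$ means there exists $t'$ with $t \sim_{-X} t'$ and $t', i \vDash \varphi$; on the MSO side, $\exists X$ ranges over arbitrary subsets of the positions $\{1, \dots, \last\}$ of the single fixed structure for $t$. The hard part will be arguing that these ranges coincide: every trace $t'$ with $t \sim_{-X} t'$ induces exactly a reinterpretation of $X$ as some subset of positions (leaving all other propositions, and crucially the \emph{length} of the trace, unchanged), and conversely every subset of positions arises from some such $t'$. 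Once this bijection between $\{t' : t \sim_{-X} t'\}$ and subsets $S \subseteq \{1,\dots,\last\}$ is established, the inductive hypothesis applied to $\varphi$ (now with $X$ free) transfers the existential witness across, completing the case. I would flag that the subtlety is entirely in ensuring $\sim_{-X}$ preserves trace length so that the two quantifier domains match; the rest of the induction is routine.
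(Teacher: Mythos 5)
Your proposal is correct and follows essentially the same route as the paper's proof: a structural induction (strengthened to open formulas, with the trace/assignment correspondence $w_t(A) = \{i \mid A \in t(i)\}$ fixed up front), with the quantifier case resolved exactly as the paper does, by identifying traces $t'$ with $t \sim_{-X} t'$ with reinterpretations of $X$ as subsets of positions. The only cosmetic difference is that the paper quantifies over traces in the inductive hypothesis and dismisses the bijection with ``as can be easily checked,'' whereas you state it explicitly as the delicate point --- a fair emphasis, but not a different argument.
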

 \begin{proof}
     By an induction on $\varphi$; see \Cref{apx:proofs}.
 \end{proof}

This translation gives us the following technique to solve synthesis for a \QLTLf{} formula.
Once we have translated the formula to MSO, we can use MONA to obtain the DFA and then play the DFA game to solve synthesis.
\begin{theorem}\label{thm:solveQltlfUsing}
    The technique for \QLTLf synthesis is correct.
\end{theorem}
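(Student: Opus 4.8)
The plan is to establish correctness by composing the two translation results already available, namely the reduction of synthesis under unreliable input to a \QLTLf{} specification (Theorem~\ref{thm:redToQltlf}) and the semantics-preserving translation $\mathsf{mso}$ from \QLTLf{} to MSO. The claim ``the technique is correct'' must be read as: for any \QLTLf{} formula $\psi$, solving the DFA game on the automaton MONA produces from $\mathsf{mso}(\psi, x)$ coincides with \QLTLf{} synthesis for $\psi$; and, specialized via Theorem~\ref{thm:redToQltlf}, solving the game for $\mathsf{mso}(\varphi_m \wedge \forall U_1.\dots\forall U_n.\varphi_b, x)$ solves \LTLf{} synthesis under unreliable input. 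So the whole proof is a chain of ``iff''s connecting the strategy winning the DFA game to the strategy realizing the original specification.

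First I would fix a \QLTLf{} formula $\psi$ over $\mathcal{X} \uplus \mathcal{Y}$ and appeal to the MSO translation theorem (stated just above) to get that for every finite trace $t$, we have $t \vDash \psi$ iff $t, [x/1] \vDash \mathsf{mso}(\psi, x)$. Since MONA, applied to a closed MSO sentence (here $\mathsf{mso}(\psi,x)$ closed off at the initial position, i.e.\ $\mathsf{mso}(\psi,x)[x/1]$ or the sentence asserting satisfaction at position $1$), returns a DFA $\mathcal{A}_\psi$ over the alphabet $2^{\mathcal{X}\cup\mathcal{Y}}$ accepting exactly the traces satisfying the sentence, the language accepted by $\mathcal{A}_\psi$ equals $\{ t \mid t \vDash \psi \}$. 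This is the key bridge: the automaton recognizes precisely the models of $\psi$, just as $A_{\varphi_m}$ in Technique~1 recognizes the models of an \LTLf{} formula.

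Second I would invoke the standard equivalence between \LTLf{}-style synthesis and the DFA reachability game (stated in the Preliminaries via \citet{de2015synthesis}): once we have a DFA whose accepted language is exactly the set of satisfying traces, a strategy $\sigma$ wins the DFA game on $\mathcal{A}_\psi$ iff for every environment sequence $\lambda$ there is a $k$ with the induced trace accepted, iff $\sigma$ realizes $\psi$. This step is purely a restatement of the soundness and completeness of the DFA-game reduction and requires no new argument beyond noting that $\psi$ and $\mathcal{A}_\psi$ have the same language. Composing the two equivalences gives: $\sigma$ wins the game on $\mathcal{A}_\psi$ iff $\sigma$ realizes \QLTLf{} synthesis for $\psi$, which is correctness for arbitrary \QLTLf{} synthesis. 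Finally, instantiating $\psi \coloneqq \varphi_m \wedge \forall U_1.\dots\forall U_n.\varphi_b$ and chaining with Theorem~\ref{thm:redToQltlf} yields that the technique solves \LTLf{} synthesis under unreliable input.

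I expect the main obstacle to be purely bookkeeping rather than conceptual: one must be careful that the MSO sentence handed to MONA correctly encodes ``$t \vDash \psi$'', i.e.\ that it asserts satisfaction \emph{at the first position} and quantifies the free second-order variables appropriately so that for a \emph{closed} \QLTLf{} formula the resulting MSO sentence is genuinely closed and its models are exactly the satisfying traces. A subtle point worth flagging is the treatment of open propositions versus quantified second-order variables (the footnote in the excerpt warns that these are conflated); I would make explicit that the input/output propositions $\mathcal{X}\cup\mathcal{Y}$ remain free monadic predicates read off the trace alphabet, while only genuinely quantified variables of $\psi$ are bound. Modulo this care in setting up the alphabet and the initial-position assertion, the proof is a two-line composition of \Cref{thm:redToQltlf}, the $\mathsf{mso}$-translation theorem, and the DFA-game reduction.
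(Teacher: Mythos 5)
Your proposal is correct and takes essentially the same approach as the paper: the paper's proof is literally ``follows from the correctness of the translation,'' i.e., the $\mathsf{mso}$-translation theorem combined (implicitly) with MONA's construction of a DFA recognizing exactly the models of the MSO sentence and the standard soundness/completeness of the DFA reachability game from the Preliminaries, which is precisely the chain of equivalences you spell out. Your extra care about asserting satisfaction at the first position and keeping the propositions in $\mathcal{X}\cup\mathcal{Y}$ free while binding only the genuinely quantified variables is a welcome elaboration of details the paper leaves tacit, not a different argument.
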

\begin{proof}
Follows from the correctness of the translation.
\end{proof}
As a result, we can correctly solve our problem:
\begin{theorem}
    The technique to solve synthesis under unreliable input with \LTLf main specification $\varphi_m$ and \LTLf backup specification $\varphi_b$, based on synthesis for the \QLTLf formula $\psi=\varphi_m \wedge \forall U_1. \dots \forall U_n. \varphi_b$, is correct.
\end{theorem}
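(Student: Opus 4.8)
The plan is to derive this statement directly by chaining together the two reduction results already established in the excerpt. The key observation is that the final theorem asserts \emph{correctness} of a specific pipeline: start from the instance of \LTLf{} synthesis under unreliable input, reduce it to \QLTLf{} synthesis for the formula $\psi = \varphi_m \wedge \forall U_1. \dots \forall U_n. \varphi_b$, and then apply the general \QLTLf{} synthesis technique (MSO translation followed by playing the DFA game). So the proof is a composition argument, and the work has already been done in \Cref{thm:redToQltlf} and \Cref{thm:solveQltlfUsing}.

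First I would invoke \Cref{thm:redToQltlf}, which states that a strategy $\sigma$ realizes the instance of \LTLf{} synthesis under unreliable input with $\mathcal{X}_{unr} = \{U_1, \dots, U_n\}$ if and only if it realizes synthesis for the \QLTLf{} formula $\psi = \varphi_m \wedge \forall U_1. \dots \forall U_n. \varphi_b$. This is an \emph{iff} at the level of realizing strategies, so it guarantees that the two synthesis problems have exactly the same set of realizing strategies (and in particular, one is realizable precisely when the other is). Second, I would invoke \Cref{thm:solveQltlfUsing}, which establishes that the general \QLTLf{} synthesis technique---translating $\psi$ to MSO via the $\mathsf{mso}$ map, using MONA to obtain the corresponding DFA, and solving the resulting DFA game---correctly solves \QLTLf{} synthesis for any closed \QLTLf{} formula, hence in particular for $\psi$. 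Correctness of that technique itself rests on the translation correctness theorem (that $t, i \vDash \varphi$ iff $t, [x/i] \vDash \mathsf{mso}(\varphi, x)$), so the DFA produced by MONA accepts exactly the traces satisfying $\psi$, making the DFA game equivalent to \QLTLf{} synthesis for $\psi$.

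Combining the two, the strategy computed by the MSO-based technique on input $\psi$ is a realizing strategy for \QLTLf{} synthesis for $\psi$ by \Cref{thm:solveQltlfUsing}, and by \Cref{thm:redToQltlf} that same strategy realizes the original instance of synthesis under unreliable input; conversely, any strategy realizing the original instance realizes \QLTLf{} synthesis for $\psi$, so the technique neither reports spurious solutions nor misses existing ones. This yields correctness of the end-to-end technique.

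I do not anticipate a genuine obstacle here, since both ingredients are already available; the only point requiring care is ensuring that $\psi$ is indeed a \emph{closed} \QLTLf{} formula over the variables $\mathcal{X} \uplus \mathcal{Y}$ with the $U_i$ bound by the universal quantifiers, so that the translation-correctness and \QLTLf{}-synthesis-correctness results apply to it without a gap in their hypotheses. One should also confirm that the notion of a strategy realizing \QLTLf{} synthesis for $\psi$ used in \Cref{thm:solveQltlfUsing} coincides with the notion used in \Cref{thm:redToQltlf}, so that the two equivalences compose cleanly; this is immediate from the shared game-theoretic semantics, but it is the hinge of the argument and worth stating explicitly.
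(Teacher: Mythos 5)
Your proposal is correct and follows essentially the same route as the paper: the paper's own proof is precisely the composition of \Cref{thm:redToQltlf} (strategy-level equivalence between synthesis under unreliable input and \QLTLf{} synthesis for $\psi$) with \Cref{thm:solveQltlfUsing} (correctness of the MSO-based \QLTLf{} synthesis technique). Your additional remarks about $\psi$ being closed and about the two notions of realization coinciding are sound points of care, but they do not change the argument, which the paper states as immediate.
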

\begin{proof}
Immediate by \Cref{thm:redToQltlf} and \Cref{thm:solveQltlfUsing}.
\end{proof}

As we already discussed, we can only upper-bound the runtime using MONA as worst-case non-elementary because of the MSO-to-DFA translation.
This technique can be easily implemented using Syft~\citep{2017zhu_syft}, an open-source \LTLf\ syntheziser.\footnote{The original Syft source code is available at \url{github.com/Shufang-Zhu/Syft}. For our modifications, see the earlier linked repository.}
The only modifications to Syft's pipeline happen before running synthesis; only the input we generate for MONA changes. 
It is interesting to notice that the implementations of the other techniques end up using MONA to create the DFA, too, since they use Syft.
\section{Empirical Evaluation}\label{sect:emp}
We implemented the algorithms to empirically evaluate their performance. While the direct technique is optimal in the worst case, our empirical results show that other techniques perform comparably or even better.

\textbf{Implementation.}
We built the techniques on top of the \LTLf{}-synthesis tool Syft, reusing existing implementations for belief-state and direct automata construction~\citep{tabajara2020ltlf} with some adjustments. Detailed instructions for compiling and using our implementation are in \Cref{apx:reprod}.

\textbf{Benchmarks.}
We used instances of the examples of varying sizes: hiker (trail length 10 to 50, increments of 5), sheep (even input sizes up to $n = 10$), and trap graphs (multiple sizes).
For sheep, we restricted our attention to even input sizes as otherwise, trivially synthesis under unreliable input is impossible since standard synthesis already is. 
For $n > 10$, DFA-generation using  MONA timed out. 

\textbf{Experimental Setup.} Experiments were conducted on a high-performance compute cluster running Red Hat Enterprise Linux 8.10. Tests used an Intel Xeon Platinum 8268 processor (2.9 GHz) with 256 GB RAM per test, executed on a single CPU core. 

\textbf{Experimental Results.} We report the results for each example; all terminating runs produced correct results.
Graphs of the runtime on different instances of the examples can be found in \Cref{fig:runtimeSheep,fig:runtimeTrap}. We used an average of 2 runs to produce the results. Given the deterministic nature of our algorithms, the additional run is only for double-checking.
The y-axis shows the total runtime in seconds (DFA construction and synthesis combined) for our test suite, while the x-axis lists each test name. Each test has three bars representing runtimes for the direct automata (blue), belief state (orange), and MSO approach (green). Striped bars reaching the red timeout line indicate tests that ran out of memory or exceeded 30 minutes.

The main findings are this: 
 Considering only the amount of test cases solved within the time limit, MSO performs best, with belief-state second and direct approach solving the least and noticeably smaller amount. We conjecture that this is partly due to MONA's efficient implementation and its already minimized DFAs, unlike the non-minimized product DFAs we compute, which worsen runtime for synthesis.

 In general, the runtime of MSO is lower (and, for bigger examples, orders of magnitude lower) than that of the other approaches.
For the hiker example, both MSO and belief state construction can solve all test cases up to $50$ in length. 
 In contrast, projection only manages to solve very small examples. 
Here, it is notable that many tests timed out during DFA construction, suggesting that constructing the automaton for the reversed language for these examples is hard.
 This pattern also shows up in the other examples, albeit less pronounced.      As discussed before, in sheep and trap, as the number of sheep or traps increases, the number of variables also increases, while for hiker, this is constant (6 input variables, 3 output variables); hence, hiker examples are solvable for larger values.
We performed a Wilcoxon-signed rank test with a significance level of $\alpha = .05$ , both for each example individually and for the whole dataset. The runtimes are significantly different between approaches, except for the runtime difference between direct and belief approaches on the trap examples. 
We compared our MSO synthesis technique's performance with synthesis for the main and backup formula under full/partial observability, respectively. Our runtime is proportional to the sum of both, with a maximum 2.5x overhead (c.f. \Cref{apx:figures}).
\begin{figure}
    \includegraphics{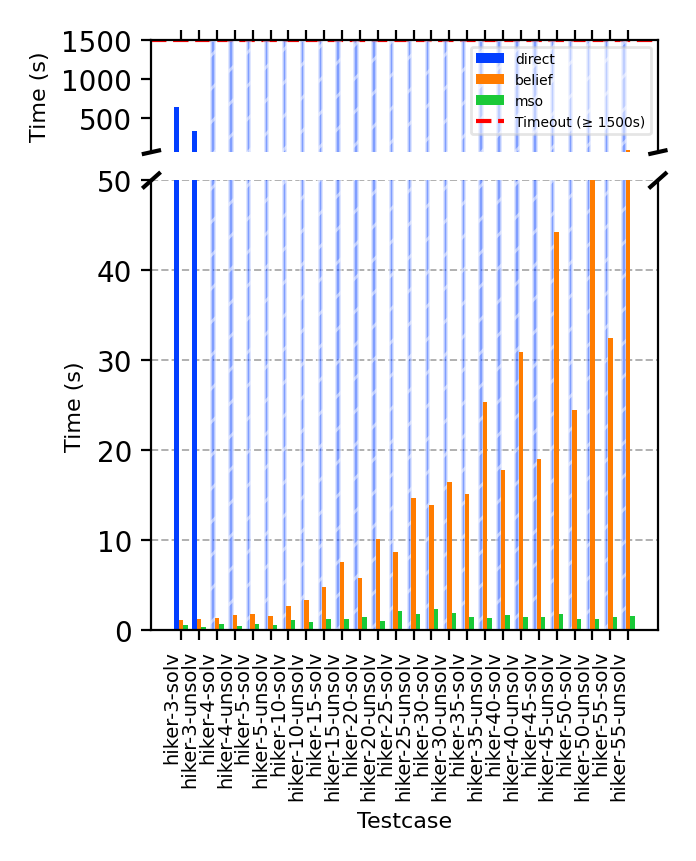}
    \label{fig:runtimeHiker}
    \includegraphics{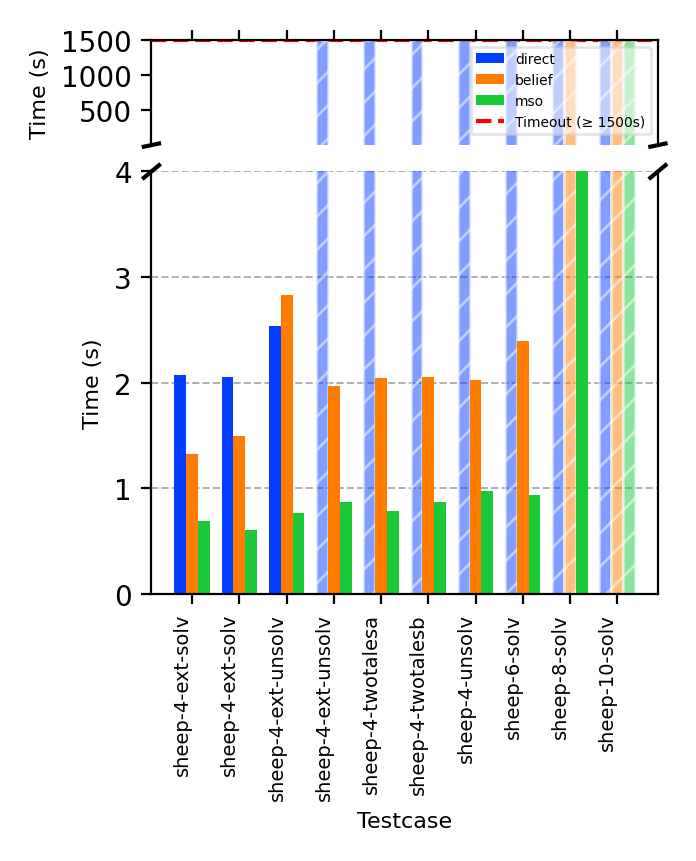}
    \caption{Runtime for the sheep and hiker examples.}
    \label{fig:runtimeSheep}
\end{figure}
\begin{figure}
        \includegraphics[scale=0.8]{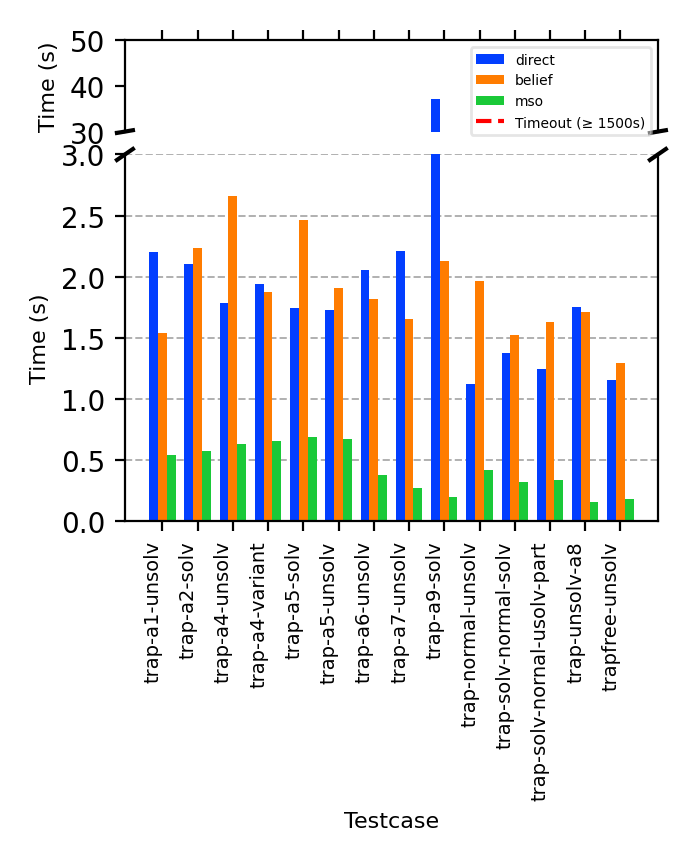}
    \caption{Runtime for the trap examples.}
    \label{fig:runtimeTrap}
\end{figure}

\section{Related Work}\label{sect:related}
Our work is related to previous investigations on \LTLf synthesis under partial observability, a problem that was originally investigated by \citet{de2016ltlf}. While related, our problem is novel: it requires the agent to simultaneously play two games in different arenas with distinct objectives, one corresponding to standard synthesis and the other to synthesis with partial information, thereby generalizing the problem. Further, \citet{tabajara2020ltlf} empirically investigate implementing algorithms for \LTLf synthesis under partial observability, translating the results from \citet{de2016ltlf} into practice.
They also discuss an MSO approach (though they do not discuss \QLTLf{} explicitly), the direct approach, and belief state construction. 
Our empirical results, however, are somewhat different; in our benchmarks, we never encountered situations where the direct approach or belief state construction solved instances that were not solvable using MSO, which they encounter in all their benchmarks.
This may be because we have both a main and backup specification, and the larger main specification often dominates the runtime; additionally, the experimental setups (i.e., amount of memory) were different.
Similarly related is work on \LTL synthesis under partial information~\citep{Kupferman2000, 2015ehlers_estimator}. However, we did not investigate embeddings into \LTL in our setting because of the generally better scalability of \LTLf synthesis~\citep{Zhu_De_Giacomo_Pu_Vardi_2020}.

There are both stochastic and non-stochastic related approaches. \citet{de2023a} consider computing best-effort \LTLf{}-strategies. While in both their setting and ours multiple variants of the environment are considered, their best-effort synthesis assumes a fully reliable and observable environment, which does not apply to our framework, and rather models the agent being uncertain about the specific environment and not errors in the input. In the stochastic setting, \citet{yu2024a} studied the trembling hand problem, which refers to scenarios where the agent may instruct a wrong action (with a certain probability). In contrast, in our setting, the unreliability is on the environment.
Multiple works use partially observable Markov decision processes to cope with uncertainty about the environment~\citep{hibbard2019unpredictable,lauri2022partially}.
More closely related is planning under partial observability; for example, \citet{aineto2023action} study planning where the agent's actions may fail up to k times, which is similar to our framework and could be modeled in it. 
Similarly, \citet{aminof2022verification} consider planning for multiple agents in a partially reliable environment simultaneously.
Our work is also related to work on planning with soft and hard \LTL{}/ \LDLf{} goals.
For example, \citet{rahmani2020you, rahmani2019optimal} consider the problem of satisfying an \LTL specification while guaranteeing that a subset of some soft constraints (expressed in \LDLf or \LTL, respectively) is satisfied (where they are ordered by priority). 
\citet{guo2018probabilistic} address the synthesis of control policies for MDPs that ensure high-probability satisfaction of \LTL tasks while optimizing total cost; their method too employs a product of automata, but additionally involves solving two linear programs.

\citet{2007WuExpressiveQltl} characterizes the expressive power of \QLTL{}, observing that one alternation already suffices to express all $\omega$-regular languages. Recently, there has been renewed interest in second-order quantification in infinite-trace variants of \LTL~\citep{piribauer2021quantified}; however, we are not aware of prior work on \QLTLf{}.

\section{Conclusion}
In this paper, we investigated reactive synthesis with backup goals for unreliable inputs using \LTLf{} as our specification language. 
We presented three algorithms, two of which match the 2EXPTIME-hardness result, of which the MSO approach performs best. Moreover, we showed how our problem can be seen as an instance of \QLTLf{} synthesis. While we investigated synthesis with unreliable input in the context of \LTLf, but it would be interesting to extend this study to other forms of specifications, possibly distinguishing the formalism used for the main goal and the backup one (i.e., using LTL safety specifications, \citet{AminofGSFRZ23}); furthermore one could here explore where the backup goal is satisfied earlier or later than the main goal. Additionally, it may be instructive to consider our problem in planning domains. Our techniques do not rely on probabilities and always ensure the backup condition is met, which is crucial for safety-critical scenarios. For less critical systems, this requirement may be relaxed by using quantitative techniques.
In this paper, we have also introduced techniques for synthesis in \QLTLf{}. \QLTLf{} deserves further study, including whether other problems can be cast naturally as \QLTLf{} synthesis.
\section*{Acknowledgments} 
We thank Antonio Di Stasio, Shufang Zhu and Pian Yu for insightful discussions on the topic of this paper. We also thank Lucas M. Tabajara for providing the source code for \LTLf synthesis under partial observability.  %
We would like to acknowledge the use of the University of Oxford Advanced Research Computing (ARC) facility in carrying out the experiments (http://dx.doi.org/10.5281/zenodo.22558). 
This research has been supported by the ERC Advanced Grant WhiteMech (No. 834228). 
\clearpage
\bibliographystyle{aaai}
\bibliography{aaai25}
\clearpage
\appendix
\newpage
\section{Proofs}\label{apx:proofs}

\subsection{Direct Approach}
\begin{theorem}\label{thm:syncProdAnd}
Let $A_1, A_2$ be DFAs over the same alphabet, i.e., $\mathcal G_i = (2^{\mathcal X} \times 2^{\mathcal Y}, S_i, s_{0, i}, \delta_i, F_i)$ (with $i \in \{ 1, 2 \}$, then a trace $t \in L(G_1 \otimes G_2)$ if and only if  $t \in L(G_1) \wedge t \in L(G_2)$. 
\end{theorem}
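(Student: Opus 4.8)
The plan is to reduce everything to a single structural fact: the run of the product automaton on a word decomposes component-wise into the runs of the two factors. Concretely, I would first extend each transition function $\delta_i$ (and $\delta'$) from single letters to finite words in the usual way, writing $\hat\delta_i(s,t)$ for the state reached from $s$ after reading $t$, defined by $\hat\delta_i(s,\epsilon) = s$ and $\hat\delta_i(s, t\cdot a) = \delta_i(\hat\delta_i(s,t), a)$, and analogously $\hat\delta'$ for the product. Acceptance then says $t \in L(G_i)$ iff $\hat\delta_i(s_{0,i}, t) \in F_i$, and likewise for $G_1 \otimes G_2$ with initial state $(s_{0,1}, s_{0,2})$ and final set $F' = F_1 \times F_2$.

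The key step is the claim
\begin{equation*}
\hat\delta'\bigl((s_{0,1}, s_{0,2}), t\bigr) = \bigl(\hat\delta_1(s_{0,1}, t),\ \hat\delta_2(s_{0,2}, t)\bigr)
\end{equation*}
which I would prove by induction on the length of $t$. The base case $t = \epsilon$ is immediate from the definition of the initial state. For the inductive step, writing $t = t'\cdot a$ and applying the definition of $\delta'$ together with the induction hypothesis gives $\hat\delta'((s_{0,1},s_{0,2}), t'\cdot a) = \delta'(\hat\delta'((s_{0,1},s_{0,2}),t'), a) = \delta'((\hat\delta_1(s_{0,1},t'), \hat\delta_2(s_{0,2},t')), a)$, and unfolding $\delta'$ componentwise yields exactly $(\hat\delta_1(s_{0,1}, t'\cdot a), \hat\delta_2(s_{0,2}, t'\cdot a))$.

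Given this claim, the theorem is a one-line calculation: $t \in L(G_1 \otimes G_2)$ iff $\hat\delta'((s_{0,1},s_{0,2}), t) \in F_1 \times F_2$, which by the claim holds iff $\hat\delta_1(s_{0,1},t) \in F_1$ \emph{and} $\hat\delta_2(s_{0,2},t) \in F_2$, i.e.\ iff $t \in L(G_1)$ and $t \in L(G_2)$. The equivalence follows directly from the product structure of $F'$.

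I do not expect a genuine obstacle here; the only point requiring care is keeping the two directions of the final biconditional honest, which is handled cleanly because $F' = F_1 \times F_2$ makes membership in the product's final set literally the conjunction of the two component conditions. Totality of $\delta_1, \delta_2$ (hence of $\delta'$) guarantees the runs are always defined, so no partiality issues arise. The induction above is the entire technical content, and it is routine.
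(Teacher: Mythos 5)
Your proof is correct and follows essentially the same route as the paper's: both arguments rest on the fact that the run of the product automaton decomposes componentwise into the runs of $\mathcal G_1$ and $\mathcal G_2$, and then acceptance reduces to the conjunction via $F' = F_1 \times F_2$. The only difference is presentational — the paper asserts the run decomposition ``by construction of the transitions,'' whereas you make it rigorous with an explicit induction on the extended transition function $\hat\delta'$, which is the cleaner write-up of the same idea.
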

\begin{proof}    
From left-to-right assume that $t \in L(G_1 \oplus G_2)$. This is the case if and only if  running $t$ on the product automaton produces a sequence of states $((s_{0, 1}, s_{0,2}, \dots, (s_{i,1}, s_{j,2}))$ with $s_{i,1} \in F_1$ and $s_{i,2} \in F_2$. By construction of the transitions, this is the case if and only if running $t$ on $\mathcal G_1$ produces the state sequence $(s_{0,1}, \dots, s_{i,1})$ and running $t$ on $\mathcal G_2$, but this is the case iff $t_1 \in L(\mathcal G_1)$ and $t_2 \in L(\mathcal{G}_2)$.
\end{proof}

\begin{lemma}\label{lem:existentialAbstraction}
    For any trace $t$, we  have $t \in L(N_{\exists U_1, \dots, U_n})$ if and only if there is a trace $t' \sim_{- U_1, \dots, U_n } t$ such that $t' \in L(N)$.
\end{lemma}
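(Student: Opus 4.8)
The plan is to prove Lemma~\ref{lem:existentialAbstraction} by unfolding the definition of the existentially abstracted transition function from \Cref{def:existentialNFA} and showing that a single run of $N_{\exists U_1, \dots, U_n}$ on $t$ corresponds exactly to a run of $N$ on some witness trace $t'$ that agrees with $t$ everywhere except possibly on the unreliable inputs $U_1, \dots, U_n$. The key observation is that $\delta'$ nondeterministically allows, at each step, any $N$-transition whose label differs from the actual label only on the $U_i$; so choosing the accepting run of $N_{\exists U_1, \dots, U_n}$ amounts to choosing, letter by letter, a modified label, and concatenating these choices yields exactly the witness $t'$.

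First I would prove the right-to-left direction, which is the easier one. Suppose there is $t' \sim_{-U_1,\dots,U_n} t$ with $t' \in L(N)$, say $t'$ induces an accepting run $s_0, s_1, \dots, s_m$ in $N$ with $s_{j+1} \in \delta(s_j, t'(j+1))$ and $s_m \in F$. Since $t' \sim_{-U_1,\dots,U_n} t$, at each position $j$ the labels $t(j)$ and $t'(j)$ agree up to the unreliable inputs, so writing $t(j) = (X, Y)$ and $t'(j) = (X', Y)$ we have $X \sim_{-U_1,\dots,U_n} X'$. Hence the witness $X'$ in the definition of $\delta'$ can be taken to be exactly the one from $t'$, giving $s_{j+1} \in \delta'(s_j, t(j+1))$. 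Thus the same state sequence is an accepting run of $N_{\exists U_1,\dots,U_n}$ on $t$, so $t \in L(N_{\exists U_1,\dots,U_n})$.

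For the left-to-right direction I would take an accepting run $s_0, s_1, \dots, s_m$ of $N_{\exists U_1,\dots,U_n}$ on $t$, so $s_{j+1} \in \delta'(s_j, t(j+1))$ for each $j$ and $s_m \in F$. Unfolding $\delta'$ at each step yields, for every $j$, a witness $X'_j$ with $X_j \sim_{-U_1,\dots,U_n} X'_j$ (where $t(j+1) = (X_j, Y_j)$) such that $s_{j+1} \in \delta(s_j, (X'_j, Y_j))$. I would then define $t'$ position-by-position by $t'(j+1) = (X'_j, Y_j)$. By construction $t' \sim_{-U_1,\dots,U_n} t$, the very same state sequence is a run of $N$ on $t'$ witnessing $t' \in L(N)$, and $s_m \in F$ gives acceptance.

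The main subtlety, rather than a genuine obstacle, is the bookkeeping that the abstraction is \emph{pointwise}: the witness $X'_j$ may differ at each position, so one must collect the per-step witnesses and assemble them into a single coherent trace $t'$, then verify that this assembled trace indeed satisfies $t' \sim_{-U_1,\dots,U_n} t$ globally. This is exactly where the definition of $\sim_{-U_1,\dots,U_n}$ via $\mathsf{exp}$ and $\mathsf{proj}$ is used — equality up to the unreliable variables at \emph{every} instant is precisely the conjunction of the per-position conditions $X_j \sim_{-U_1,\dots,U_n} X'_j$. Since $N$ and $N_{\exists U_1,\dots,U_n}$ share the same state set, initial state, and final states, and differ only in the transition relation, no re-indexing of runs is needed, which keeps the argument clean.
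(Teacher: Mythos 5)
Your proof is correct and follows essentially the same approach as the paper's: in one direction you use the labels of the witness trace $t'$ as the per-step witnesses for $\delta'$, and in the other you extract the per-step witnesses $X'_j$ from the accepting run of $N_{\exists U_1,\dots,U_n}$ and assemble them into the trace $t'$. If anything, your write-up is more careful than the paper's (explicitly noting that the $Y$-components are preserved and that the pointwise witnesses glue into a globally $\sim_{-U_1,\dots,U_n}$-equivalent trace), but the underlying argument is identical.
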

 \begin{proof}
 	Assume that $t \in L(N_{\exists U_1, \dots, U_n})$.
    This implies that for running $t$ on $\mathcal N$ there is a possible sequence of states $s_0, \dots, s_k$ with $s_k \in F$.
    By definition, each transition $(s_i, t(i), s_{i+1})$ was possible since there was a way to modify $t(i)$ on the existentially abstracted variables such that the original automaton transitioned from $s_i$ to $s_{i+1}$. Putting these together gives a trace $t'$ that satisfies the definition. 
 	
 	The other direction is similar, using that we can arbitrarily modify the variables by existentially abstracting. 

  For the other direction, let $t$ be a trace s.t. there is a trace $t'$ with $t \sim_{- U_1, \dots, U_n} t'$ and $t' \in L(N)$. We need to show that $t \in L(N_{\exists U_1, \dots, U_n})$. Since $t' \in L(N)$, there is a sequence of states $(s_0, \dots, s_k)$ with $s_k \in F$ and $(s_i, t'(i), s_{i+1}) \in \delta$.
  Now, since $t \sim_{- U_1, \dots, U_n } t'$, we can use the same transition as we did for the modified trace, and thus to reach $s_k$ and therefore $t \in L(N_{\exists U_1, \dots, U_n})$.
 \end{proof}

\begin{lemma}\label{char:complementExistentialAbstraction}
    We have that $t \in \overline{\mathcal D(N_{\exists U_1, \dots, U_n})}$, if and only if for any trace $t' \sim_{-U_1, \dots, U_n} t$,we have $t' \notin L(N)$.
\end{lemma}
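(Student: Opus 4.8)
The plan is to obtain the claim as a short chain of equivalences, reducing everything to the already-established \Cref{lem:existentialAbstraction}. The right-hand side of the biconditional is precisely the negation of the existential condition that characterizes membership in $L(N_{\exists U_1, \dots, U_n})$, so once I strip away the determinization and complementation operators on the outside, the statement should fall out with no further combinatorial work.

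First I would dispose of the two automata constructions wrapping $N_{\exists U_1, \dots, U_n}$. Determinization via the subset construction is language-preserving, so $L(\mathcal D(N_{\exists U_1, \dots, U_n})) = L(N_{\exists U_1, \dots, U_n})$; I take this as a standard fact. Complementation of the resulting DFA merely swaps accepting and non-accepting states, so for any trace $t$ we have $t \in \overline{\mathcal D(N_{\exists U_1, \dots, U_n})}$ iff $t \notin L(\mathcal D(N_{\exists U_1, \dots, U_n}))$. Chaining these two observations gives $t \in \overline{\mathcal D(N_{\exists U_1, \dots, U_n})}$ iff $t \notin L(N_{\exists U_1, \dots, U_n})$.

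Next I would invoke \Cref{lem:existentialAbstraction}, which states that $t \in L(N_{\exists U_1, \dots, U_n})$ iff there is some $t' \sim_{- U_1, \dots, U_n} t$ with $t' \in L(N)$. Negating both sides, $t \notin L(N_{\exists U_1, \dots, U_n})$ iff there is no such $t'$, i.e., iff for every $t' \sim_{- U_1, \dots, U_n} t$ we have $t' \notin L(N)$. Concatenating this with the equivalence from the previous step yields exactly the statement of the lemma.

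The only point requiring care—and the nearest thing to an obstacle—is that DFA complementation realizes the set-theoretic complement of the accepted language only when the determinized automaton is complete, that is, when its transition function is total. This is guaranteed by the subset construction provided the empty subset is retained as a non-accepting sink state, so I would note this explicitly to justify the complementation step on finite traces. Beyond this bookkeeping, the argument is a pure manipulation of quantifiers together with standard automata-theoretic identities, with all the genuine content already carried by \Cref{lem:existentialAbstraction}.
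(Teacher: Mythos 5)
Your proof is correct and follows essentially the same route as the paper's: both reduce membership in $\overline{\mathcal D(N_{\exists U_1, \dots, U_n})}$ to non-membership in $L(N_{\exists U_1, \dots, U_n})$ and then negate the existential characterization of \Cref{lem:existentialAbstraction} into the universal statement. Your explicit attention to language preservation under determinization and to the completeness of the subset-construction DFA is bookkeeping the paper leaves implicit, but it adds no new ideas beyond the paper's argument.
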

\begin{proof}
    Assume that $t \in L(\overline{\mathcal D(N_{\exists U_1, \dots, U_n})})$. This is the case if and only if $t \notin L((N_{\exists U_1, \dots, U_n}))$.  
    By \Cref{lem:existentialAbstraction} this is the case if and only if there is no trace $t' \sim_{-U_1, \dots, U_n } t$ s.t. $t' \in L(N)$, which is equivalent to claiming that for any trace $t' \sim_{-U_1, \dots, U_n}$ we have $t' \notin L(N)$.
\end{proof}

\begin{corollary}\label{col:characeriseExistential}
    We have that $t \in \overline{\mathcal D((\mathcal A_{\lnot \varphi_b})_{\exists U_1, \dots, U_n})}$, if and only if for any trace $t' \sim_{-U_1, \dots, U_n } t$,we have $t' \vDash \varphi_b$.
\end{corollary}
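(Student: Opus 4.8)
The plan is to obtain this as an immediate corollary of \Cref{char:complementExistentialAbstraction} by instantiating its abstract NFA $N$ with the concrete automaton $\mathcal A_{\lnot \varphi_b}$ for the negated backup formula. First I would recall the defining property of $\mathcal A_{\lnot \varphi_b}$: it is constructed precisely so that its language is the set of traces violating the backup specification, that is, $L(\mathcal A_{\lnot \varphi_b}) = \{ t' \mid t' \vDash \lnot \varphi_b \}$. By the \LTLf{} semantics of negation this is the same as $\{ t' \mid t' \nvDash \varphi_b \}$, so membership in $L(\mathcal A_{\lnot \varphi_b})$ is exactly the failure of the backup condition.

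Next I would apply \Cref{char:complementExistentialAbstraction} with $N \coloneqq \mathcal A_{\lnot \varphi_b}$. This yields that $t \in \overline{\mathcal D((\mathcal A_{\lnot \varphi_b})_{\exists U_1, \dots, U_n})}$ holds if and only if every trace $t' \sim_{-U_1, \dots, U_n} t$ satisfies $t' \notin L(\mathcal A_{\lnot \varphi_b})$. Substituting the language characterization from the first step, the condition $t' \notin L(\mathcal A_{\lnot \varphi_b})$ is equivalent to $t' \nvDash \lnot \varphi_b$, and hence to $t' \vDash \varphi_b$. Chaining these equivalences over the universally quantified class of reliable-equivalent traces gives precisely the claimed statement.

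The only point requiring care is the interaction of the complement on the automaton side with the negation on the formula side: the NFA recognizes $\lnot \varphi_b$, and it is the outer complementation of its determinized existential abstraction that flips the recognized condition back to $\varphi_b$, now applied uniformly across the $\sim_{-U_1, \dots, U_n}$ class. Beyond this bookkeeping there is essentially no obstacle, since all the real content, namely the correct handling of the existential abstraction over the unreliable variables, was already established in \Cref{lem:existentialAbstraction} and propagated through \Cref{char:complementExistentialAbstraction}.
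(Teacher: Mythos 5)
Your proposal is correct and follows essentially the same route as the paper's proof: both instantiate \Cref{char:complementExistentialAbstraction} with $N \coloneqq \mathcal A_{\lnot \varphi_b}$ and then use the correctness of the \LTLf{}-to-automaton translation ($t' \in L(\mathcal A_{\lnot \varphi_b})$ iff $t' \vDash \lnot \varphi_b$) to convert non-membership into satisfaction of $\varphi_b$. The only difference is presentational — you chain the equivalences directly, whereas the paper argues the two directions separately (the converse by contradiction) — but the mathematical content is identical.
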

\begin{proof}
    Let $t \in L( \overline{\mathcal D((\mathcal A_{\lnot \varphi_b})_{\exists U_1, \dots, U_n})})$.
    Let $t' \in \mathsf{exp}_{\mathcal X_{unr}}(\mathsf{proj}_{X_{unr}}(t))$. By \Cref{char:complementExistentialAbstraction}, we then have that $t' \notin L(\mathcal A_{\lnot \varphi_b})$. But this implies that $t' \vDash \varphi_b$, just as we needed.

    For the other direction, assume that for any trace $t' \sim_{-U_1, \dots, U_n} t$,we have $t' \vDash \varphi_b$.
    Assume for the sake of deriving a contradiction that $t \notin \overline{\mathcal D((\mathcal A_{\lnot \varphi_b})_{\exists U_1, \dots, U_n})}$.
    This would imply  that  $t \in \mathcal D((\mathcal A_{\lnot \varphi_b})_{\exists U_1, \dots, U_n})$, which would imply (by \Cref{char:complementExistentialAbstraction}) that there is a trace $t'' \sim_{-U_1, \dots, U_n} t$ s.t. $t'' \vDash \lnot \varphi$.
    This would contradict our initial assumption.
\end{proof}
\rsThmCorrectProject*
\begin{proof}
    We have to argue that any strategy that guarantees reaching a final state in the DFA game, also ensures that synthesis under unreliable input is solved. 
    Thus let $\sigma$ be a strategy that wins in the synchronized product, thus there is an index $k$, where the transitions induced by the strategy of the DFA end in an accepting state.
    We have to show that $\sigma$ satisfies the conditions for synthesis under unreliable input. Therefore, let $\lambda = (X_0, \dots, ) \in {(2^{\mathcal X})}^\omega$ be arbitrary. Since $\sigma$ is a winning strategy in the DFA game, there is an index $k$ s.t., the state sequence induced by $t = ((Y_0, \sigma(Y_0)), \dots, )$ is in a final state. 
    We show that the same $k$ also satisfies the conditions for synthesis under unreliable input.

    First, since $\sigma$ wins in the DFA-game, we know that $t \in L(\mathcal A_{\varphi_m} \otimes \mathcal \overline{D((\mathcal A_{\lnot \varphi})_{\exists U_1, \dots, U_n})})$ we have (by \Cref{thm:syncProdAnd}) that $t \in L(\mathcal A_{\varphi_m})$, which is the case if and only if $t \vDash \varphi_m$.
     The other requirement to solve synthesis under unreliable input, follows using  \Cref{col:characeriseExistential,thm:syncProdAnd}.

    For the other direction, assume that $\sigma$ solves synthesis  under unreliable input. We have to show that $\sigma$ is also a winning strategy in the DFA game. 
    Therefore, let $\lambda = (X_0, \dots, ) \in {(2^{\mathcal X})}^\omega$ be arbitrary. Since $\sigma$ is strategy realizing synthesis under unreliable input, there is an index $k$ s.t., $t \vDash \varphi_m$ and for any ${t}' \sim_{-U_1, \dots, U_n} t$ we have that $t' \vDash \varphi_b$.  We need to argue that the DFA ends up in an accepting state.

    Again, it suffices to show both $t \in L(\mathcal A_{\varphi_m})$ and $t \in L(\mathcal D((\mathcal A_{\lnot \varphi})_{\exists U_1, \dots, U_n}))$.
    Again, the first is immediate from the correctness of the automaton. The second follows directly from \Cref{col:characeriseExistential}.
\end{proof}

\subsection{Belief State Approach}
\begin{lemma}\label{lem:equalOnUnobs}
For any two traces $t, t'$ with $t' \in \mathsf{exp}_{\mathcal X_{unr}}(\mathsf{proj}_{\mathcal X_{unr}}(t))$ we have that $t \in L(\mathcal G^{rel}_{\mathcal A})  \iff t' \in L(\mathcal G^{rel}_{\mathcal A})$
\end{lemma}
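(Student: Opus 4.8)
The plan is to exploit the fact that the belief-state transition function $\partial$ depends on the input letter only through its reliable part. The key step, which I would isolate first, is the following invariance: whenever two input valuations agree outside $\mathcal X_{unr}$, i.e.\ $X \sim_{-\mathcal X_{unr}} \tilde X$, then for every belief state $B$ and every output $Y$ we have $\partial(B, (X \cup Y)) = \partial(B, (\tilde X \cup Y))$. This holds because in the definition of $\partial$ the unreliable variables are existentially abstracted away: the set $\{X' \mid X \sim_{-\mathcal X_{unr}} X'\}$ over which the quantifier ranges depends only on the $\sim_{-\mathcal X_{unr}}$-equivalence class of $X$, and $X,\tilde X$ lie in the same class. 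Hence the two existential quantifications range over identical sets of $X'$ and produce the same successor belief. I would note explicitly that $\sim_{-\mathcal X_{unr}}$ is an equivalence relation (immediate from the $\mathsf{proj}/\mathsf{exp}$ definitions), since this is precisely what licenses the equality of abstraction sets.

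With this invariance in hand, the second step is a straightforward induction on position along the trace. By hypothesis $t' \in \mathsf{exp}_{\mathcal X_{unr}}(\mathsf{proj}_{\mathcal X_{unr}}(t))$, so $t$ and $t'$ have the same length and, at every instant $j$, their letters $t(j)$ and $t'(j)$ agree on $\mathcal X_{rel} \cup \mathcal Y$, differing at most on $\mathcal X_{unr}$, i.e.\ $t(j) \sim_{-\mathcal X_{unr}} t'(j)$. I would then show that the belief-state runs of $t$ and $t'$ coincide position by position: both start in $B_0 = \{s_0\}$, and if the two runs occupy the same belief state after $j-1$ steps, then applying $\partial$ to $t(j)$ and to $t'(j)$ yields the same belief by the first step, since these letters differ only on $\mathcal X_{unr}$.

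Finally, the two runs reach the same final belief state $B_{\last}$. Since acceptance in $\mathcal G^{rel}_{\mathcal A}$ is determined solely by whether this final belief state lies in $\mathcal F = 2^F$, we conclude that $t \in L(\mathcal G^{rel}_{\mathcal A})$ iff $t' \in L(\mathcal G^{rel}_{\mathcal A})$, as required.

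The only real content lies in the first step—the observation that $\partial$ factors through the reliable projection of the input letter; everything after it is a routine induction. The main obstacle, such as it is, is simply making the existential-abstraction argument precise: one must verify that equal $\sim_{-\mathcal X_{unr}}$-classes of the input letter genuinely force equal abstraction sets, so that the transition is well-defined as a function of the reliable part alone. This is the belief-state analogue of \Cref{lem:existentialAbstraction} for the direct construction, and I would present it in the same spirit.
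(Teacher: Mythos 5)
Your proof is correct and takes essentially the same approach as the paper's: the paper's own argument is exactly the observation that the belief-state transitions $\partial$ are invariant under changing the unreliable variables, followed by a simple induction along the trace. Your write-up merely makes explicit the details the paper leaves implicit (the equality of abstraction sets via the equivalence-class argument, and the position-by-position coincidence of the two runs).
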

\begin{proof}
Follows by a simple induction, from the fact that the transitions are invariant under changing the unobservable variables.
\end{proof}

\begin{lemma}\label{lem:tracesBeliefState}
	If a trace $t$ induces  a play $(S_0, S_1, \dots, S_{t+1})$ in $G^{rel}_{A}$ and for any $s_{t+1} \in S_{t+1}$, then there is a trace $t' \sim_{-\U_1, \dots, U_n } t$ that induces a play $\rho' = (s_0, \dots, s_{t+1})$ on $\mathcal A$.  
\end{lemma}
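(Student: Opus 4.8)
The plan is to prove the claim by induction on the length of the trace $t$, equivalently on the number of belief states in the induced play. Throughout I would fix the reading that $\mathcal{X}_{unr} = \{U_1, \dots, U_n\}$, so that the relations $\sim_{-U_1,\dots,U_n}$ and $\sim_{-\mathcal{X}_{unr}}$ coincide, and recall from the definition of the belief-state game that $S_0 = B_0 = \{s_0\}$ and $S_{i+1} = \partial(S_i, a_i)$, where $a_i \in 2^{\mathcal{X} \cup \mathcal{Y}}$ is the $i$-th symbol of $t$. For the base case the play is just $(S_0) = (\{s_0\})$: the only element of $S_0$ is $s_0$, and the empty trace trivially induces the one-state play $(s_0)$ on $\mathcal{A}$, which is indistinguishable from $t$ up to $\mathcal{X}_{unr}$ since both are empty. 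This discharges the base case immediately.

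For the inductive step I would write $t = \hat{t} \cdot a$ with final symbol $a = (X \cup Y)$ and prefix $\hat{t}$ inducing the belief states $S_0, \dots, S_m$, so that $S_{m+1} = \partial(S_m, a)$. Given an arbitrary $s_{m+1} \in S_{m+1}$, the first move is to unfold the definition of $\partial$: membership of $s_{m+1}$ in $\partial(S_m, X \cup Y)$ yields a witness state $s_m \in S_m$ together with an input $X'$ satisfying $X \sim_{-\mathcal{X}_{unr}} X'$ and $\delta(s_m,(X' \cup Y)) = s_{m+1}$. I then apply the induction hypothesis to the shorter trace $\hat{t}$ and the state $s_m \in S_m$, obtaining a trace $\hat{t}' \sim_{-\mathcal{X}_{unr}} \hat{t}$ that induces a play on $\mathcal{A}$ ending in $s_m$. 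Finally I extend this witness by setting $t' = \hat{t}' \cdot (X' \cup Y)$ and verify the two required properties: (i) $t' \sim_{-\mathcal{X}_{unr}} t$, which holds because $\hat{t}'$ agrees with $\hat{t}$ up to $\mathcal{X}_{unr}$ by the induction hypothesis and the last symbols agree up to $\mathcal{X}_{unr}$ since $X \sim_{-\mathcal{X}_{unr}} X'$ while $Y$ and the reliable inputs are unchanged; and (ii) $t'$ induces a play on $\mathcal{A}$ ending in $s_{m+1}$, since the prefix reaches $s_m$ and $\delta(s_m,(X' \cup Y)) = s_{m+1}$ extends it to $s_{m+1}$.

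The step I expect to require the most care is the observation that makes the single-trace reconstruction sound. Note that $\partial$ permits a \emph{different} modification of the unreliable variables (and a different source state) for each target state $s'$ of the belief set; so a priori one might worry that committing to the witness $X'$ at the final instant conflicts with the modifications chosen at earlier instants by the induction hypothesis. The delicate point to articulate clearly is that the indistinguishability relation $\sim_{-\mathcal{X}_{unr}}$ constrains each time instant \emph{independently}, so the witnesses at distinct instants can be selected without interaction; there is no global consistency condition linking the modification at instant $m$ to those at earlier instants. Once this independence is made explicit, the inductive assembly of $t'$ from $\hat{t}'$ and the final witness symbol goes through with no further obligation, and the lemma follows.
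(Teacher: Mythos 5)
Your proof is correct and takes essentially the same route as the paper's: induction on the length of the trace, unfolding the definition of $\partial$ at the last step to obtain a witness state $s_m \in S_m$ and a modified input $X'$ with $X \sim_{-\mathcal{X}_{unr}} X'$, applying the induction hypothesis to the prefix, and appending the witness symbol. If anything, your writeup is more careful than the paper's sketch, which appends $(X,Y)$ where it should append the modified symbol $(X' \cup Y)$, a slip you avoid.
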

\newcommand\mdoubleplus{\mathbin{+\mkern-10mu+}}

\begin{proof}
	The proof is by induction on the length of the trace. 
	The base case for traces of length 0 is trivial, as only $s_0 \in S_0$.
	For the inductive case, we consider the subtrace $(S_0, \dots, S_{i-1})$.
	
	 We know that since $s_{i} \in S_i$, there is a transition $(s, X \cup Y, s_{i})$. Moreover, using the inductive hypothesis, there is a trace $t'$ of length $i$ that leads to $s$ in the original automaton. If we consider $t = t'.(X, Y)$, this trace will lead to the state $s_i$.  
	 \end{proof}
\begin{lemma}\label{lem:recongitionIsTheSame}
	If $t \in L(\mathcal G_{A}^{Obs})$, then $t \in L(\mathcal A)$.
\end{lemma}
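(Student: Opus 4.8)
The plan is to run $t$ on both automata in parallel and show that the single deterministic run of $t$ on $\mathcal A$ is always contained in the belief-state reached by $t$ on $\mathcal G_A^{Obs}$ (the belief-state DFA $\mathcal G^{rel}_{\mathcal A}$); acceptance then follows because $\mathcal F = 2^F$ forces \emph{every} member of an accepting belief-state to be final.

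First I would set up the two runs. Since $\mathcal A$ is a DFA with total transition function, the trace $t$ induces a unique run $s_0, s_1, \dots, s_{\last}$ with $s_{i+1} = \delta(s_i, t(i+1))$, while on the belief automaton $t$ induces a play $B_0, B_1, \dots, B_{\last}$ with $B_0 = \{s_0\}$ and $B_{i+1} = \partial(B_i, t(i+1))$.

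The heart of the argument is the invariant $s_i \in B_i$, proved by induction on $i$. The base case holds because $B_0 = \{s_0\}$. For the inductive step, assume $s_i \in B_i$ and write the $(i+1)$-st letter of $t$ as $X \cup Y$. The only property of $\partial$ that I need is that the \emph{identity} modification $X' := X$ of the unreliable variables is always admissible, i.e. $X \sim_{-\mathcal{X}_{unr}} X$ holds trivially; hence $s = s_i \in B_i$ together with this $X'$ witnesses $s_{i+1} = \delta(s_i, X \cup Y) = \delta(s_i, X' \cup Y) \in \partial(B_i, X \cup Y) = B_{i+1}$, which closes the induction.

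Finally I would combine the invariant with the two acceptance conditions. By hypothesis $t \in L(\mathcal G_A^{Obs})$ means the terminal belief-state satisfies $B_{\last} \in \mathcal F = 2^F$, i.e. $B_{\last} \subseteq F$; by the invariant $s_{\last} \in B_{\last}$, so $s_{\last} \in F$, which is precisely $t \in L(\mathcal A)$. The one point requiring care is that the result leans on two design choices acting together: $\partial$ retaining the identity setting of $\mathcal{X}_{unr}$ (so the true run is never dropped from the belief), and $\mathcal F = 2^F$ being a \emph{universal} condition on belief-states (so membership of the true endpoint in the accepting belief already guarantees that it is final). Getting the direction of the containment $B_{\last} \subseteq F$ right --- rather than the reverse --- is the only genuine subtlety, and it is exactly what distinguishes this direction from the converse handled via \Cref{lem:tracesBeliefState}.
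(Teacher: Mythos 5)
Your proof is correct and follows essentially the same route as the paper's: establish the tracking invariant $s_i \in B_i$ for the unique run of $t$ on $\mathcal A$, then use $\mathcal F = 2^F$ (a universal condition on belief-states) to conclude $s_{\last} \in F$. If anything, your explicit induction via the reflexivity of $\sim_{-\mathcal X_{unr}}$ is a cleaner justification of the invariant than the paper's appeal to \Cref{lem:tracesBeliefState}, which actually states the converse containment (every member of a belief-state is reached by \emph{some} indistinguishable trace) rather than the tracking fact itself.
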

\begin{proof} 
	Use the fact that each step of the original automaton is also \emph{tracked} in the power sets, i.e. for the sequence of states $(s_0, \dots, s_t, s_{t+1})$ generated by running $t$ on $\mathcal A$, we will always have that $s_i \in S_i$ where $S_i$ are the states of the belief state automaton (\Cref{lem:tracesBeliefState}).
	
	But this implies that the last state of running $t$ on the original automaton is included in the last set $S_n$ we transition into, and since this is a final state of the belief state automaton, we know that $s_{t+1} \in F$ and thus $t \in L(A)$. 
\end{proof}

\begin{theorem}\label{lem:charBelStat}
For any trace $t$,  $t \in \mathcal L(G^{rel}_{\mathcal A_{\varphi_b}})$ if and only if  for any $t'$ with $t \sim_{-U_1, \dots, U_n} t'$ we have $t' \vDash \varphi_b$. 
\end{theorem}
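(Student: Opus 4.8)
The plan is to prove both directions of the biconditional, relying on three facts: a belief state $B$ is accepting exactly when $B \subseteq F$ (since $\mathcal{F} = 2^F$); $\mathcal A_{\varphi_b}$ is deterministic, so every trace has a unique run and $t' \vDash \varphi_b$ iff that run ends in $F$; and the three auxiliary lemmas \Cref{lem:equalOnUnobs,lem:recongitionIsTheSame,lem:tracesBeliefState}. The conceptual picture is that the belief state reached after reading $t$ collects precisely the states reachable in $\mathcal A_{\varphi_b}$ along all variants $t' \sim_{-U_1,\dots,U_n} t$, so that acceptance of this belief state means all such variants are accepted. I also use that $\sim_{-U_1,\dots,U_n}$ is symmetric, so $t \sim_{-U_1,\dots,U_n} t'$ is the same as $t' \in \mathsf{exp}_{\mathcal X_{unr}}(\mathsf{proj}_{\mathcal X_{unr}}(t))$.

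For the left-to-right direction, I assume $t \in L(G^{rel}_{\mathcal A_{\varphi_b}})$ and fix an arbitrary $t'$ with $t \sim_{-U_1,\dots,U_n} t'$. Since then $t' \in \mathsf{exp}_{\mathcal X_{unr}}(\mathsf{proj}_{\mathcal X_{unr}}(t))$, \Cref{lem:equalOnUnobs} transfers membership and gives $t' \in L(G^{rel}_{\mathcal A_{\varphi_b}})$, whereupon \Cref{lem:recongitionIsTheSame} yields $t' \in L(\mathcal A_{\varphi_b})$, i.e. $t' \vDash \varphi_b$. As $t'$ was arbitrary, this is exactly the right-hand side.

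For the right-to-left direction, I assume every $t' \sim_{-U_1,\dots,U_n} t$ satisfies $\varphi_b$, and let $(S_0, \dots, S_{k+1})$ be the play induced by $t$ on $G^{rel}_{\mathcal A_{\varphi_b}}$; it suffices to show $S_{k+1} \subseteq F$. Picking an arbitrary $s \in S_{k+1}$, \Cref{lem:tracesBeliefState} supplies a variant $t' \sim_{-U_1,\dots,U_n} t$ whose run on $\mathcal A_{\varphi_b}$ ends precisely in $s$. By hypothesis $t' \vDash \varphi_b$, so $t' \in L(\mathcal A_{\varphi_b})$, and by determinism its unique run ends in $F$; since that terminal state is $s$, we get $s \in F$. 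As $s$ was arbitrary, $S_{k+1} \subseteq F = \mathcal{F}$, the play is accepting, and hence $t \in L(G^{rel}_{\mathcal A_{\varphi_b}})$.

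The step requiring the most care is the right-to-left direction, where one must read \Cref{lem:tracesBeliefState} correctly: for each state of the reached belief set it must produce a single witnessing variant whose deterministic run lands on exactly that state, and the passage from ``$S_{k+1}\subseteq F$'' to ``every such witness is accepted'' hinges on $\mathcal A_{\varphi_b}$ being deterministic (otherwise a variant might be accepted via a run different from the one terminating in $s$, breaking the identification of the terminal state with $s$). The left-to-right direction, by contrast, is essentially immediate once \Cref{lem:equalOnUnobs,lem:recongitionIsTheSame} are in place.
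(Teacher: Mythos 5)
Your proof is correct and takes essentially the same route as the paper's: both directions rest on exactly the same three lemmas (\Cref{lem:equalOnUnobs,lem:recongitionIsTheSame,lem:tracesBeliefState}), with the only cosmetic difference that you argue both directions directly while the paper phrases them as proofs by contradiction. Your explicit appeal to the determinism of $\mathcal A_{\varphi_b}$ and to the acceptance condition $\mathcal F = 2^F$ in the right-to-left direction makes precise a point the paper's proof passes over quickly, but the underlying argument is identical.
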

\begin{proof}

Assume $t \in \mathcal L(G^{rel}_{\mathcal A_{\varphi_b}})$. Assume for the sake of deriving a contradiction that there is a trace $t' \in \mathsf{exp}_{\mathcal X_{unr}}(\mathsf{proj}_{\mathcal X_{unr}}(t')$, such that $t' \nVdash \varphi_b$.
But since $t \in L(G^{rel}_{\mathcal A_{\varphi_b}})$, by \Cref{lem:equalOnUnobs}, we would also have that $t' \in  L(G^{rel}_{\mathcal A_{\varphi_b}})$ and thus (by \Cref{lem:recongitionIsTheSame}) that $t' \in L(\mathcal A_{\varphi_b})$ and therefore $t' \vDash \varphi_b$ -- a contradiction.

For the other direction, assume for any $t'$ with $t' \sim_{-U_1, \dots, U_n} t$ we have $t' \vDash \varphi_b$ but assume for the sake of contradiction, that $t \notin L(G^{rel}_{\mathcal A_{\varphi_b}})$. This can only be the case if a non-accepting state is among the state-set the belief state automaton is in after running $t$.
	This would imply (by 
	 \Cref{lem:tracesBeliefState}) that there is a trace $t' \in \mathsf{exp}_{\mathcal X_{unr}}(\mathsf{proj}_{\mathcal X_{unr}}(t))$ s.t. $t' \nVdash \varphi_b$ (as it does not end in an accepting trace of the deterministic automaton). 
  But this contradicts our initial assumption.
\end{proof}

\thmCorrectBelief*
\begin{proof}
The proof is almost identical to the one for \Cref{thm:projectcorrect}.  
To see this, notice that \Cref{lem:charBelStat,col:characeriseExistential} together imply that the belief state and projection automaton both recognize the same traces, thus the correctness argument is identical.
\end{proof}

\subsection{Translation of Quantified \LTLf{} to MSO}

\begin{definition}[MSO Semantics]
    Given a domain $\mathcal U = \{ 1, \dots, n \}$, and two functions $v: \mathcal V_1 \rightarrow \mathcal U$ assigning first-order variables to members of $U$ and a $w: \mathcal V_2 \rightarrow \mathcal U$ assigning second-order variables to subsets of $U$, we define entailment for MSO: 
\begin{align*}
    u, w \vDash X(x) &\logeq u(x) \in w(X) \\ 
    u, w \vDash x < y &\logeq u(x) < u(w) \\
    u, w \vDash \varphi \wedge \psi &\logeq (u, w \vDash \varphi) \wedge (u, w \vDash \psi) \\ 
    u, w \vDash \lnot \varphi &\logeq \lnot (u, w \vDash \varphi) \\ 
    u, w \vDash \exists x. \varphi &\logeq \exists \hat{x} \in \mathcal U \text{ s.t. } u[x/\hat{x}], w \vDash \varphi \\ 
    u, w \vDash \exists X. \varphi &\logeq \exists \hat{X} \subseteq \mathcal{U} \text{ s.t. }  u, w[X/\hat{X}] \vDash \varphi       
\end{align*}

\end{definition}

\begin{definition}
    Given a trace $t$, we define the corresponding assignment of second order-variables by setting $w_t(A) = \{ i \mid A \in t(i) \}$ for any $A \in \mathcal P$.
    Then we define that $t, [i/x] \vDash \varphi$ is a shorthand for $[i/x], w_t \vDash \varphi$.
\end{definition}

 \begin{lemma}\label{lem:msotrans}
 	Let $\varphi$ be a \QLTLf formula and $t$ be a finite trace. 
Then $t, i \vDash \varphi$ iff $[x/i], w_t \vDash \varphi$.
 \end{lemma}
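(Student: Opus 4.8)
The plan is to prove \Cref{lem:msotrans} by structural induction on the \QLTLf{} formula $\varphi$, establishing that for every finite trace $t$ and every position $i$, we have $t, i \vDash \varphi$ iff $[x/i], w_t \vDash \mathsf{mso}(\varphi, x)$. The induction should be stated for all positions $i$ simultaneously (and, strictly, with an arbitrary second-order assignment extending $w_t$ on the free variables), since the temporal cases require invoking the hypothesis at positions other than $i$ and the quantifier case requires varying the assignment.

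First I would dispatch the atomic and boolean cases. For $\varphi = X$, by definition $t, i \vDash X$ iff $X \in t(i)$, which by the definition of $w_t$ is exactly $i \in w_t(X)$, i.e. $[x/i], w_t \vDash X(x) = \mathsf{mso}(X, x)$. The cases $\lnot \varphi$ and $\varphi \wedge \psi$ follow immediately from the inductive hypothesis and the matching boolean clauses of the MSO semantics. For $\Next \varphi$, I would use that $t, i \vDash \Next \varphi$ holds iff $i < \last$ and $t, i+1 \vDash \varphi$; on the MSO side, $\exists y.\, succ(x,y) \wedge \mathsf{mso}(\varphi, y)$ holds iff there is a successor position $i+1$ in the domain (which exists exactly when $i < \last$) satisfying $\mathsf{mso}(\varphi, y)$, so the equivalence follows by the inductive hypothesis applied at $i+1$. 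The key subtlety here is that the existence of a successor in the finite domain correctly captures the strict semantics of $\Next$ over finite traces; I would make explicit that $succ(x,y)$ can only be witnessed when $x$ is not the last position. The until case $\varphi \lUntil \psi$ is the most bookkeeping-heavy but routine: the MSO formula quantifies a witness $y$ with $x \leq y \leq \last$ satisfying $\mathsf{mso}(\psi, y)$ and universally asserts $\mathsf{mso}(\varphi, z)$ for all $x \leq z < y$, which matches the \LTLf{} semantics clause verbatim once the inductive hypothesis is applied at $y$ and at each $z$.

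The genuinely interesting case, and the main obstacle, is the second-order quantifier $\exists X.\, \varphi$. On the \QLTLf{} side, $t, i \vDash \exists X.\, \varphi$ holds iff there exists a trace $t'$ with $t \sim_{-X} t'$ and $t', i \vDash \varphi$; on the MSO side, $[x/i], w_t \vDash \exists X.\, \mathsf{mso}(\varphi, x)$ holds iff there exists a subset $\hat{X} \subseteq \mathcal U$ with $[x/i], w_t[X/\hat{X}] \vDash \mathsf{mso}(\varphi, x)$. The crux is to show that the two notions of "modifying where $X$ holds" coincide: a trace $t'$ with $t \sim_{-X} t'$ is precisely one whose induced assignment $w_{t'}$ agrees with $w_t$ on every variable except possibly $X$, and any such $w_{t'}$ arises as $w_t[X/\hat{X}]$ for $\hat{X} = \{\, j \mid X \in t'(j)\,\}$, and conversely every choice of $\hat{X}$ is realized by some such $t'$ over the same domain. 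I would establish this correspondence as a small bridging claim, then apply the inductive hypothesis to $t'$ (whose assignment is exactly $w_t[X/\hat{X}]$) to transfer $t', i \vDash \varphi$ to $[x/i], w_t[X/\hat{X}] \vDash \mathsf{mso}(\varphi, x)$ and back. The care needed is purely in matching the trace-level relation $\sim_{-X}$ with the assignment-level modification $w_t[X/\hat{X}]$, and in ensuring the domain $\mathcal U = \{1, \dots, \last\}$ stays fixed (expanding $X$ does not change trace length), so the quantifier ranges over the same set of positions on both sides.

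Finally, the theorem stated in the excerpt is the closed, top-level instance $i = 1$ of this lemma, so once the induction is complete I would note that for a closed formula the choice of $w_t$ on free variables is immaterial and conclude $t \vDash \varphi$ iff $t, [x/1] \vDash \mathsf{mso}(\varphi, x)$, as required.
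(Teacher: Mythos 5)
Your proposal is correct and follows essentially the same route as the paper's proof: a structural induction on $\varphi$ with the position $i$ (and trace) quantified, identical handling of the atomic, boolean, $\Next$, and $\lUntil$ cases, and the same treatment of $\exists X.\,\varphi$ via the correspondence between traces $t'$ with $t \sim_{-X} t'$ and modified assignments agreeing with $w_t$ off $X$. Your explicit bridging claim in the quantifier case (and the remark that the domain stays fixed under expansion) merely spells out what the paper dismisses with ``as can be easily checked,'' so there is no substantive difference.
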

 \begin{proof}
 The proof is by induction on the formula with $i, t, x$ quantified.
 
 \begin{itemize}
 	\item Assume that $\varphi = A$. 
    Then $t, i \vDash A$ if and only if $A \in t(i)$, which is the case iff $A \in t(i)$, this is the case iff $i \in w_t(A)$ which again is the case iff $[x/i], w_t \vDash A(x)$. This concludes this step since $A(x) = \mathsf{mso}(A, x)$.
 		 
 	\item Assume that $\varphi = \lnot \psi$.
    Then we know that $t \vDash \lnot \psi$ if and only if it is not the case that $t, i \vDash \psi$, which by the IH, is equiavlent to it not being the case that $[x/i], w_t \vDash \mathsf{mso}(\psi, x)$.
    By semantics this is the case iff $[x/i], w_t \vDash \lnot \mathsf{mso}(\psi, x)$ and this is the case if and only if $[x/i], w_t \vDash \mathsf{mso}(\varphi, x)$.

 	\item Assume that $\varphi = \Next \psi$. 
 	We prove both directions separately.
 	 	For the forward direction, assume that $t, i \vDash \Next \psi$. This is the case if and only if $t, i+1 \vDash \psi$. By the inductive hypothesis, this is equivalent to $[y/(i+1)], w_t \vDash \mathsf{mso}_{\mathcal{B}}(\psi, y)$. Since $succ(x, y)$ is interpreted as the successor relation, we have $\sigma[x/i], w_t \vDash \exists y. succ(x, y) \wedge \mathsf{mso}(\psi, y)$. By the definition of the translation function, this is equivalent to $[x/i], w_t \vDash \mathsf{mso}(\Next \psi, x)$.
 	
 	For the backward direction, assume that $[x/i], w_t \vDash \mathsf{mso}(\Next \psi, x)$. By definition, this means $[x/i], w_t \vDash \exists y. succ(x, y) \wedge \mathsf{mso}(\psi, y)$. Thus there is a $\hat{y} \in \mathcal U$ such that 
 	$[x/i][y/\hat{y}], w_t \vDash succ(x, y) \wedge \mathsf{mso}(\psi, y)$.
 	This is the case if both $[x/i][y/\hat{y}], w_t \vDash succ(x, y) $ and $[x/i][y/\hat{y}], w_t \vDash \mathsf{mso}(\psi, y)$. The first implies that $(i, \hat{y})$ are successors, thus $\hat{y} = i+1$.
 	The second (noticing that $x$ does not occur free in $\mathsf{mso}(\psi, y)$) implies that $ [y/i+1], w_t \vDash \mathsf{mso}(\psi, y)$, which by the inductive hypothesis implies that $t, i+1 \vDash \psi$.
 	This directly implies that $t, i \vDash \Next \psi$.
 	\item Assume that $\varphi = \psi \lUntil \chi$.
 	Assume that $t, i \vDash \psi \lUntil \chi$. This is the case if there exists $j$, with $i \leq j \leq \last$, such that $t, j \vDash \chi$ and for all $k$ with $i \leq k < j$, $t, k \vDash \psi$. By the inductive hypothesis, this is equivalent to there existing $j \geq i$ such that $[x/j], w_t \vDash \mathsf{mso}(\chi, x)$ and for all $k$ with $i \leq k < j$, $[x/k], w_t \vDash \mathsf{mso}(\psi, x)$. By MSO semantics and the definition of our model (importantly, since $<$ and $S$ are interpreted as they are on $\mathbb N$), this is the case iff 
 	 $[x/i], w_t \vDash \exists y. (x \leq y \leq \text{last}) \wedge \mathsf{mso}(\chi, y) \wedge \forall z. (x \leq z < y \rightarrow \mathsf{mso}(\psi , z)).$
 	This is, by the definition of the translation function, equivalent to $\mathcal{I}_{t}, \sigma[x/i] \vDash \mathsf{mso}_{\mathcal{B}}(\psi \lUntil \chi, x)$.
  	\item Assume that $\varphi = \exists X. \, \psi$. 
    We know that $t, i \vDash \exists X. \varphi$ if and only if there is a $t'$ s.t. $t' \sim_{-X} t$ and $t', i \vDash \varphi$, by the IH this is the case iff $[x/i], w_{t'} \vDash \mathsf{mso}(\varphi, x)$; since the two variable assignments $w_t$ and $w_{t'}$  (as can be easily checked) at most differ in their $X$-assignments, this is the case iff $[x/i], w_{t} \vDash \exists X. \mathsf{mso}(\varphi, x)$, this is the case iff $[x/i], w_t \vDash \mathsf{mso}(\exists X. \varphi, x)$.
 \end{itemize}
 \end{proof}

As a direct corollary, we have a proof of the correctness statement from the main text:
\thmTranlsationmsocor* 
\begin{proof}
Directly follows from \Cref{lem:msotrans}.    
\end{proof}

\subsection{Quantified LTLf Synthesis}
\thmReductionToQltlf*
\begin{proof}
    From left to right, assume that $\sigma$ solves synthesis under unreliable input and let $t = ((X_0, Y_0), \dots,  (X_i, Y_i))$ be a trace with $Y_i = \sigma(X_0, \dots, X_{i-1})$. Then, trivially, we have that $t \vDash \varphi_m$. Thus, to establish that $t \vDash \varphi_m \wedge \forall U_1. \dots \forall U_n. \varphi_b$ we only need to show $t \vDash \forall U_1. \dots \forall U_n. \varphi_b$. 
    This can be shown considering that the semantics of $t \vDash \forall U_1. \dots. \forall U_n. \varphi_b$ unfolds into $\forall t' \in \mathsf{exp}_{\mathcal X_{unr}}(\mathsf{proj}_{\mathcal X_{unr}}(t)).\, t' \vDash \varphi_b$.

    Conversely, assume $\sigma$ synthesizes the \QLTLf formula. Thus, for any trace $t = ((X_0, Y_0), \dots,  (X_i, Y_i))$ with $Y_i = \sigma(X_0, \dots, X_{i-1})$, it holds that $t \vDash \varphi \wedge \forall U_1. \dots. U_n. \varphi_b$. Then trivially (since \QLTLf has the same semantics as \LTLf{} for its \LTLf{}-fragment), we have $t \vDash \varphi_m$. 
    Additionally, we also have that $t' \vDash \varphi_b$ for any $t'$ with $t' \in \mathsf{exp}_{\mathcal X_{unr}}(\mathsf{proj}_{\mathcal X_{unr}}(t))$, as essentially any trace in the expansion can be restored by using the universal quantification in \QLTLf.
\end{proof}

\begin{definition}
	Given a \QLTLf formula $\varphi$ in PNF with $k$-alternations, we can inductively define a deterministic automaton $A_{\varphi}$ that recognizes it: 
	
	\begin{itemize}
		\item If $\varphi$ has one alternation, we either have a bare \LTLf{}-formula, or a a chain of one or multiple quantifiers of the same type: 
		\begin{itemize}
			\item For bare formulas, we just use the \LTLf-automaton corresponding to $\varphi$.
			\item For a single existential alternation i.e. $\exists X_1, \dots, \exists X_n. \varphi$, we first construct NFA for $\varphi$ and then existentially abstract over the rest, i.e. ${(A_{\varphi})}_{\exists X_1, \dots, X_n}$, and determinize it.
			\item For a single universal alternation, i.e. $\forall X_1, \dots, X_n. \psi$, we know that this is the same as $\lnot \exists X_1 \dots \exists X_n. \lnot \psi$. We can construct a DFA  for $\psi$, then negate that, obtaining a DFA $A_{\lnot \psi}$, we then existentially abstract $(A_{\lnot \psi})_{\exists X_1, \dots, X_n}$ determinize and lastly negate once more, yielding $A_{\varphi} = \overline{D((A_{\lnot \psi})_{\exists X_1, \dots, X_n})}$. 
		\end{itemize}
		\item If the formula has a higher alternation count it is of the form $\varphi = (\exists / \forall)^+ \phi_k$ where $\phi_k$ has $k$ alternations. We thus, assume (for the inductive definition) to have an automaton $A_{\psi}$.
			\begin{itemize}
				\item If $\varphi = \exists X_1 \dots \exists X_n.  \phi_k$. We build the existentially-abstracted NFA $N_{\exists X_1, \dots, X_n}(A_{\phi_k})$, and determinise it; i.e. $A_{\psi} \coloneqq \mathcal D(N_{\exists X_1, \dots, X_n}(A_{\phi_k}))$.
				\item If $\varphi = \forall X_1, \dots X_n. \psi$, then we built the DFA $A_{\lnot \phi_k}$ (by negating the DFA provided by the inductive hypothesis). Then we build the NFA $N_{\exists X_1, \dots, X_n}$, and determinise and negate it; i.e. $A_{\varphi} = \mathcal \overline{D(\overline{N_{X_1, \dots, X_n}(A_{\lnot \psi})})}$.
			\end{itemize}
	\end{itemize}
	\end{definition}

\begin{theorem}\label{thm:qltlfCorrectness}
	The automaton $A_{\varphi}$ we compute correctly recognises $\varphi$ (for any \QLTLf formula $\varphi$ in PNF).
\end{theorem}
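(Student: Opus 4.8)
The plan is to prove the statement by induction on the alternation count $k$ of the prenex formula $\varphi$, mirroring exactly the case split of the inductive definition of $A_\varphi$. The invariant to maintain is the language characterization $L(A_\varphi) = \{ t \mid t \vDash \varphi \}$, which for closed formulas is the appropriate notion since satisfaction is evaluated at position $1$. The workhorses are already available: the correctness of the \LTLf-to-DFA translation for the bare case, \Cref{lem:existentialAbstraction} for existential abstraction, and \Cref{char:complementExistentialAbstraction} (together with the reasoning pattern of \Cref{col:characeriseExistential}) for the complemented version that handles universal quantification.

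For the base case ($k=1$) I would treat the three subcases separately. A bare \LTLf formula is immediate from the translation correctness. For a single existential block $\exists X_1 \dots \exists X_n.\,\varphi$, I first observe that the \QLTLf semantics unfolds $\exists X_1 \dots \exists X_n$ into the existence of a trace $t'$ with $t' \sim_{-X_1,\dots,X_n} t$ satisfying $\varphi$; \Cref{lem:existentialAbstraction} says precisely that $(A_\varphi)_{\exists X_1,\dots,X_n}$ accepts exactly such $t$, and determinization preserves the language. For a single universal block $\forall X_1 \dots \forall X_n.\,\psi$, I use the duality $\forall = \lnot\exists\lnot$ built into the construction: complementing $A_\psi$, existentially abstracting, determinizing, and complementing once more yields, by \Cref{char:complementExistentialAbstraction}, an automaton accepting exactly the $t$ for which every $t' \sim_{-X_1,\dots,X_n} t$ satisfies $\psi$, which is the semantics of the universal block.

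The inductive step reuses this argument almost verbatim. Given a formula $(\exists/\forall)^+\phi_k$ with $\phi_k$ of alternation count $k$, the inductive hypothesis supplies a DFA $A_{\phi_k}$ with $L(A_{\phi_k}) = \{t \mid t \vDash \phi_k\}$. Treating this DFA as an NFA, the existential-prefix case applies \Cref{lem:existentialAbstraction} and the universal-prefix case applies \Cref{char:complementExistentialAbstraction} exactly as in the base case, the only difference being that the inner automaton now comes from the hypothesis rather than from a direct \LTLf translation. One auxiliary fact I would record is that abstracting over a whole block $X_1,\dots,X_n$ of same-type quantifiers at once coincides with iterating single-variable abstraction, i.e. $t \sim_{-X_1,\dots,X_n} t'$ decomposes through intermediate traces; this follows directly from the definitions of $\mathsf{proj}$ and $\mathsf{exp}$ and justifies matching a block of quantifiers with a single abstraction step.

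The main obstacle I anticipate is bookkeeping the correspondence between the trace-level quantifier semantics (modification via $\sim_{-X}$) and the automata-level operations, especially keeping the polarity straight through the double complementation in the universal case: one must ensure that complementing, abstracting the negated language, and complementing again genuinely realizes $\forall$ rather than accidentally collapsing a quantifier. The individual lemmas isolate this difficulty, so the remaining work is to verify that the semantics clause for the quantifier prefix lines up with the hypothesis exactly at each step; no new combinatorial content beyond the cited lemmas should be required.
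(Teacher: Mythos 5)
Your proposal is correct and follows essentially the same route as the paper's own proof: induction on the alternation count with the same case split, using the \LTLf-to-DFA translation correctness for the bare case, \Cref{lem:existentialAbstraction} for existential blocks, and the complement--abstract--determinize--complement argument captured by \Cref{char:complementExistentialAbstraction} for universal blocks, with the inductive step repeating the base-case reasoning on the DFA supplied by the hypothesis. The only (welcome) difference is that you explicitly record the fact that a block of same-type quantifiers unfolds into a single abstraction step over $\sim_{-X_1,\dots,X_n}$, a point the paper's proof uses implicitly.
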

\begin{proof}
	We have to show that for any formula $\varphi$ and trace $t$, $t \vDash \varphi$ iff $t \in L(A_{\varphi})$. The proof is by induction on the alternation count. 
	\begin{itemize}
		\item Base Case:
		\begin{itemize}
			\item If $\varphi$ is a bare \LTLf{}-formula, then the correctness follows from the correctness of translating \LTLf into DFAs.
			\item Assume that $\varphi = \exists U_1. \dots \exists U_n. \psi$. Then, $t \in L(\exists X_1. \dots \exists X_n. \psi)$ iff there is a $t'$ with $t' \sim_{-X_1, \dots, X_n} t$ and $t' \vDash \psi$, this is by correctness of \LTLf translation the case if and only if $t' \in L(A_{\psi})$. But this is by \Cref{col:characeriseExistential} the case if and only if $t \in L({(A_{\psi})}_{\exists X_1, \dots, X_n})$; which is the case iff $t \in L(A_\varphi)$ (since determinization does not affect the language).
			\item Assume that $\varphi = \forall X_1. \dots \forall X_n. \psi$.
			Then $t \vDash \varphi$ if and only if for all traces $t'$ with $t \sim_{-X_1, \dots, X_n} t$ we have that $t' \vDash \psi$.
			This is equivalent to there being no trace $t'$ with $t' \sim_{-X_1, \dots, X_n} t$ with $t' \nvDash \psi$. 
			This is equivalent , by the IH, to there being no trace $t'$ with $t' \sim_{-X_1, \dots, X_n}$ having $t' \notin L(A_{\psi})$.
			
			This is equivalent to to there being no trace $t'$ with  $t' \sim_{-X_1, \dots, X_n} t$, we have that $t' \in L(A_{\lnot \psi})$. 
			But, this is equivalent it is not the case that there is a trace $t'$ with $t' \sim_{-X_1, \dots, X_n}$ and $t' \in L((A_{\lnot \psi})$.
			But this is the case iff it is not the case that $t \in L((A_{\lnot \psi})_{\exists X_1, \dots, X_m})$, this is the case iff $t \in L(\overline{D((A_{\lnot \psi})_{\exists X_1, \dots, X_n})})$.
		\end{itemize}
		\item Inductive case:
		\begin{itemize}
			\item Assume that $\varphi = \exists X_1 \dots \exists X_n \varphi_{k-1}$.
			By the inductive hypothesis, there is a DFA $A_{\varphi_{k-1}}$ that recognises $\varphi_{k-1}$. 
			Now $t \vDash \exists X_1 \dots \exists X_n. \varphi_{k-1}$ if and only if there is a trace $t'$ with $t' \sim_{-X_1, \dots, X_n} t$ and $t' \vDash  \varphi_{k-1}$. This is the case iff there is a trace $t'$ with $t' \sim_{-X_1, \dots, X_n} t$ and $t' \in L(A_{\varphi_{k-1}})$.
			But this is by \Cref{col:characeriseExistential} the case if and only if $t \in L((A_{\varphi_{k-1}})_{\exists X_1, \dots, X_n})$.
			This is the case if and only if $t \in L(A_{\varphi})$, since the determinisation does not affect the recognised language.
			\item Assume that $\varphi = \forall X_1 \dots \forall X_n \varphi_{k-1}$. 
			Then $t \vDash \varphi$ if and only if for all traces $t'$ with $t \sim_{-X_1, \dots, X_n} t$ we have that $t' \vDash \varphi_{k-1}$.
			This is equivalent to there being no trace $t'$ with $t' \sim_{-X_1, \dots, X_n} t$ with $t' \nvDash \varphi_{k-1}$. 
			This is equivalent, by the IH, to there being no trace $t'$ with $t' \sim_{-X_1, \dots, X_n}$ having $t' \notin L(A_{\varphi_{k-1}})$.
			
			This is equivalent to to there being no trace $t'$ with  $t' \sim_{-X_1, \dots, X_n} t$, we have that $t' \in L(A_{\lnot \varphi_{k-1}})$. 
			But, this is equivalent it is not the case that there is a trace $t'$ with $t' \sim_{-X_1, \dots, X_n}$ and $t' \in L((A_{\lnot \varphi_{k-1}})$.
			But this is the case iff it is not the case that $t \in L((A_{\lnot \psi})_{\exists X_1, \dots, X_m})$, this is the case iff $t \in L(\overline{D((A_{\lnot \varphi_{k-1}})_{\exists X_1, \dots, X_n})})$.

		\end{itemize}
	\end{itemize}
\end{proof}

\begin{theorem}\label{thm:qltlfRuntime}
		Computing the $A_{\varphi}$ for a \QLTLf{} formula $\varphi$ takes $(k+2)$-EXPTIME.
\end{theorem}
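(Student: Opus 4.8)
The plan is to proceed by induction on the alternation count $k$, tracking the \emph{size} of the automaton produced at each level and using the fact that every construction in the inductive definition of $A_\varphi$ runs in time polynomial in the size of its output. The key observation is that, among all the operations involved, only determinization (subset construction) causes a superpolynomial blowup: existential abstraction leaves the state set unchanged and merely rewrites transitions (\Cref{def:existentialNFA}), while complementation of a DFA and negation only toggle the final states and are therefore polynomial. Thus the entire size analysis reduces to counting how many determinizations are performed along each branch, and the time bound follows from the size bound.

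For the base case ($k=0$) I would treat the three subcases separately and show each yields a DFA of doubly-exponential size, i.e.\ $(0+2)$-EXPTIME. For a bare \LTLf{}-formula this is exactly the standard \LTLf{}-to-DFA translation, which is 2EXPTIME. For a single existential block $\exists X_1 \ldots \exists X_n.\,\psi$ with $\psi$ bare, I first build the \emph{NFA} for $\psi$ (singly exponential), existentially abstract (no change in state count), and determinize once, giving a doubly-exponential DFA. For a single universal block $\forall X_1 \ldots \forall X_n.\,\psi$, the point is to remain within two exponentials: instead of determinizing $\psi$ twice, I build the NFA for the \LTLf{}-formula $\lnot\psi$ directly (singly exponential), existentially abstract, determinize once (doubly exponential), and finally complement (polynomial). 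In all three subcases exactly one determinization is applied to a singly-exponential NFA, so $A_\varphi$ has doubly-exponential size.

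For the inductive step, let $\varphi$ have $k \geq 1$ alternations, so $\varphi = (Q\,\vec X).\,\phi_{k-1}$ with $\phi_{k-1}$ of alternation count $k-1$; by the induction hypothesis $A_{\phi_{k-1}}$ is a DFA of $(k+1)$-exponential size. If the leading block is existential, I existentially abstract $A_{\phi_{k-1}}$ (still $(k+1)$-exponentially many states, now nondeterministic) and determinize once, producing $2^{(k+1)\text{-exp}} = (k+2)\text{-exp}$ many states. If the leading block is universal, I negate $A_{\phi_{k-1}}$ (polynomial), existentially abstract, determinize once, and complement (polynomial); again exactly one determinization is applied to a $(k+1)$-exponential automaton, yielding a $(k+2)$-exponential DFA. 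Hence each additional alternation contributes exactly one further exponential on top of the $(k+1)$-exponential input, and since every intermediate construction runs in time polynomial in its (dominating) output size, the whole computation for $\varphi$ runs in $(k+2)$-EXPTIME.

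The main obstacle is the bookkeeping in the universal base case: a naive reading that first builds a DFA for $\psi$ and then determinizes a second time after abstraction would incur a spurious third exponential and break the base of the induction. The fix---building the NFA for $\lnot\psi$ directly, which is legitimate because \LTLf{} is closed under negation and admits a singly-exponential NFA translation---guarantees a single determinization per quantifier block. Once the invariant ``one determinization per alternation level, starting from two exponentials for the innermost \LTLf{} matrix'' is established, the induction goes through routinely, matching the $(k+2)$-EXPTIME bound of \Cref{thm:complexitySynthesisQltlf}.
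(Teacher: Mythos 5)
Your proof is correct and takes essentially the same route as the paper's: induction on the alternation count, with the invariant that each quantifier block costs exactly one determinization applied to the (existentially abstracted) automaton from the level below, starting from a singly-exponential NFA for the innermost \LTLf{} matrix. Your ``fix'' for the universal base case---building the NFA for $\lnot\psi$ directly rather than determinizing $\psi$ first---is precisely what the paper's own runtime analysis charges for (one exponential for the NFA of the negated matrix, then one determinization), so you have correctly identified the only subtle point and resolved it the same way.
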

\begin{proof}
	The proof is by induction on the alternation count.
	\begin{itemize}
		\item If the alternation count is 0, we have three base cases:
		\begin{itemize}
			\item If the matrix is a \LTLf formula, then trivially it takes 2EXPTIME, as we need one exponential for creating an NFA and one for determinisation.
			\item If the formula is of the kind $\exists^* \varphi$, then we need one exponential for the NFA for $\varphi$, polynomial for the existential abstraction, and another exponential for the determinzation.
			\item If the formula is of the kind $\forall^* \varphi$, then we need one exponential to create the NFA for $\lnot \varphi$. Then, we need polynomial time for the existential abstraction, one exponential for determinizing and afterwards complementing is linear time. Thus, we overall have 2EXPTime.
		\end{itemize}
		\item For the inductive step, we have to consider two cases:
		\begin{itemize}
			\item If the formula is of the form $\exists^* \varphi_{k-1}$ then we need (k+1)-EXPTIME to create a DFA recognising $\varphi_{k-1}$. The existential abstraction NFA is polytime, however determinising it costs another exponential, yielding $(k+2)$-EXPTIME.
			\item If the formula is of the form $\forall^* \varphi_{k-1}$ then we need (k+1)-EXPTIME to create a DFA recognising $\varphi_{k-1}$. Negating a DFA is polynomial, then existential abstraction produces an NFA and is too polynomial. Determinization takes another exponential, negation is polynomial, thus we overall take $(k+2)$-EXPTIME.
		\end{itemize}
	\end{itemize}
\end{proof}

These allow us to show that we can do synthesis for arbitrary \QLTLf{} specifications.

\thmQLTLfkplustwoExpTime*
\begin{proof}
	Assume w.l.o.g. that $\varphi$ is in PNF; then we can compute an automaton that recognises $\varphi$ in (k+2)-EXPTIME, by \Cref{thm:qltlfCorrectness,thm:qltlfRuntime}. 
	We can then solve the game in polynomial time.
\end{proof}

\section{Sheep Example}\label{apx:sheep}
Consider a farmer who has $n$ sheep $S = \{ 1, \dots, n \}$ that are standing on the left side of a river. She only has a boat to cross the river. However, the boat can only move exactly two sheep at one point in time; additionally, some sheep $\mathcal D \subseteq S \times S$ do not like each other and cannot be moved across in the same time instance.
Additionally, the farmer believes some sheep currently like each other $L \subseteq \{ 1, \dots, n \} \times \{ 1, \dots , n \}$, but she may be mistaken about this. 
The main goal is to move all sheep across; however, if the farmer is mistaken about some of the sheep pairs belonging to set $D$ or $L$, then she is fine with moving a special subset $S' \subseteq S$ across.

Let us now describe how we translate this problem into an instance of \LTLf{} synthesis under unreliable input. We introduce $n$ atomic variables $\mathsf{left}_i$ that are true if the $i$-th sheep is currently on the left side. 
Additionally, we introduce output variables $\mathsf{move_i}$ that allow the agent to ask the environment to move a sheep. 
Lastly, for each pair $(i, j) \in \mathcal D \cup \mathcal L$, we introduce a variable $\mathsf{disallow_{i,j}}$ that indicates whether the sheep are allowed to move together.

\paragraph{Environment Description:} 
 We first describe the environment constraints: 
 \begin{itemize}
 	\item  Initially, all sheep are left i.e. $\bigwedge_{s \in \mathcal S} \mathsf{left}_s$ needs to be true.
 	\item If a sheep is not requested to move, its position stays the same: \[\bigwedge_{s \in \mathcal S} \Always\left (\Next \left ( \lnot \mathsf{move_i}  \right ) \wedge \lnot \mathsf{left_i} \supset \Next \left ( \lnot \mathsf{left_i} \right ) \right )\] and \[\bigwedge_{s \in \mathcal S} \Box\left (\circ  \left ( \lnot \mathsf{move_i}  \right ) \wedge  \mathsf{left_i} \supset \circ \left (  \mathsf{left_i} \right ) \right ).\]
 	
 	\item We also need to specify when the sheep actually move. We split this into generation for pairs $(i, j)$ such that sheep i and sheep j could be blocked and pairs that are always unblocked (as for efficiency, the variables about blocking only exist for potentially blocked pairs). 
 	
 	For pairs that will never be blocked, we introduce \[ \Box  \left (\left (\Next \left (\mathsf{move_i} \wedge \mathsf{move_j}  \right ) \wedge \mathsf{left_i} \wedge \mathsf{left_j}  \right )  \supset \Next   \left (\lnot \mathsf{left_i} \wedge \lnot \mathsf{left_j}  \right )  \right ) .\] 
 	
 	For possibly blocked pairs, we introduce the following two constraints:
 	\begin{align*} \Box   (\left (\Next \left (\mathsf{move_i} \wedge \mathsf{move_j} \wedge \lnot \mathsf{disallow_{i,j}}  \right )  \wedge \mathsf{left_i} \wedge \mathsf{left_j}   \right ) \\ \supset \Next   \left (\lnot \mathsf{left_i} \wedge \lnot \mathsf{left_j}  \right )  ),\end{align*}
    \item We do not need to specify other movement constraints, as when we do not specify whether an action must lead to the sheep being moved, the environment can adversarially choose not to move the sheep.
\begin{align*}
    & \Box \left( \Next \left( \mathsf{move_i} \wedge \mathsf{move_j} \wedge \mathsf{disallow_{i,j}} \right) \wedge \mathsf{left_i} \wedge \mathsf{left_j} \right) \\
    & \supset \Next \left( \mathsf{left_i} \wedge \mathsf{left_j} \right).
\end{align*}
 	\item Lastly, we need to specify the initial situation description, namely forcing the values of the disallow variables to never change, thus for the main specification (but not for the backup one), we force that for $(i,j) \in L$ we have $\Always(\lnot \mathsf{disallow}_{i,j})$ and for $(i, j) \in D$ we have $\Always(\mathsf{disallow}_{i,j})$.

 \end{itemize}
With $\varphi_{e}$ (and $\varphi_{e'}$ for the backup specification) , we denote the conjunction of the above formulas.

\paragraph{Agent Constraints:} Secondly, we have to specify the agent constraints $\varphi_{ag}$ as a conjunction of the following formulas: 
\begin{itemize}
	\item The agent has to ensure that exactly two move variables are active. Thus we set the agent formula to be \[ \Box(\mathsf{exactly-2-of}(\mathsf{move_1}, \dots, \mathsf{move_n})). \]
	To encode the exactly-2-constraint, we use a simple quadratic encoding. \end{itemize}
Lastly, we need to specify the goal; here it is that eventually, all of the sheep are moved, i.e. $\varphi_{goal} \coloneqq \Diamond \left ( \bigwedge_{i \in \mathcal S} \lnot \mathsf{left_i} \right )$; and for the backup specification we have $\varphi_{goal'} \coloneqq \Diamond \left ( \bigwedge_{i \in \mathcal S'} \lnot \mathsf{left_i} \right )$.

\paragraph{Partitioning:} As a partitoning, we set all the disallow variables as unobservables. 

In summary, we are trying to synthesize a strategy realising $\varphi_{ag} \wedge (\varphi_{e} \supset \varphi_{goal})$, with backup-goal $\varphi_{ag} \wedge (\varphi'_{e} \supset \varphi'_{goal})$, where the goal is to only have $\Diamond \bigwedge_{i \in \mathcal S'} \lnot \mathsf{left_i}$.

\section{Graph Example}\label{apx:trap}
Let us first describe the variables. We have $ \lceil \log_2 n \rceil$ variables $pos_i$, which represent the current position of the robot in the graph. Additionally, for each trap, we introduce a variable $t_i$ that denotes whether the trap is on or off. These are all of the input variables. 
The robot has a single output variable $\mathsf{left}$ -- indicating whether to go left or right.

\paragraph{Environment Description: }
Since the agent can only move left or right, we do not need any constraints. The constraints we generate for the environment are based on having conjuncts of the form $\Box (p(e_s) \wedge \Wnext(left) \supset \Wnext(p(e_t)))$ for any edge $e = (e_s, e_t)$, where $p(n)$ is the propositional formula corresponding to being in state $n$ (i.e. using the unique binary encoding of the state). 
For edges that are used in traps, we add $t_i$ or $\lnot t_i$ as a conjunct to the left-hand side and, of course, adapt the goal state analogously. If a vertex has only 0 or 1 edge, we specify that the agent remains at that position.

Additionally, we have a condition that the traps cannot change, as this would make it impossible . This is just by having formulas of the form $t_i \supset \Always(t_i)$ and $\lnot t_i \supset \Always(\lnot t_i)$.
\paragraph{Goal:} For the normal description, the goal is just to eventually reach one of the goal states; for the backup version, we allow the safety states as well. Thus, both are of the form $\varphi_{env} \supset \varphi_{goal}$.

\paragraph{Partitioning: }The partitioning has all trap variables $t_i$ as unobservable.

\section{Hiker Example}\label{apx:hiker}

We can model this as a problem of synthesis under unreliable input in the following way: 
The environment controls the following variables: 
\begin{itemize}
	\item \textbf{berry}: If true, signifies that at the current position on the hiking trail, there is a berry that the agent can eat.
	\item \textbf{poison}: If true and berry is true, it indicates that the berry at the current position on the trail is poisonous.
	\item \textbf{herbs}: If true, indicates that at the current position of the hiking path, there are medical herbs that the agent could take to possibly alleviate sickness now or later.
	\item \textbf{sick}: If true, indicates that the hiker is currently sick.
	\item \textbf{eot}: If true, signifies that the end of the hiking trail was reached.
	\item \textbf{inbag}: If true, the herbs are currently in the hiker's bag and could be used.
\end{itemize}

The agent has control of the following variables: 
\begin{itemize}
	\item \textbf{eat}: If true, it signifies that the agent wants to eat the berry (if existent) at the current position of the hiking path.
	\item \textbf{takeMedication}: Take the herbal medication; if the hiker was sick before, they will no longer be.
	\item \textbf{collectMedication}: If there is herbal medication, the hiker can collect it (notice that their bag only allows them to store one piece of herbs).
\end{itemize}

Our formula, as in the other examples, will be of the shape $\varphi_{env} \supset \varphi_{ag}$.

\paragraph{Environment Description:}
We can describe the parts of the environment formula similar to SSA axioms in Reiter's Basic Action Theory. Thus, we basically have one successor state axiom for each of the environment variables. 

We do not have this for berry, poison and herbs, as those are randomized by the environment; however, we here include the constraint that poison is only true when berry also is true and that berries and herbs cannot be there at the same time.
\[ \Always\left ( berry \supset \lnot herbs \right ) \]
\[ \Always\left ( poison \supset berry \right ) \]

For sickness, we can notice that the hiker is sick when either they were sick before and have not both had medication available and taken medication or when they ate a poisonous berry.
This leads to the following successor state axioms:
\begin{align*}
	\Next sick &\equiv \Next \top \wedge ( \Next eat \wedge  berry \wedge  poison  ) \\ &\vee (sick \wedge \lnot (inbag \wedge takeMedication))
\end{align*}

For the end of the trail, we only have to specify that it is reached after a certain number of steps (here $k$). 
\[ \Wnext^{k}(eot) \wedge \bigwedge_{1 < k' < k} \Wnext^{k'}(\lnot eot) \]
 
 Additionally, we describe that once eot is true, it will stay true and no more berries appear before the hiker.
 \[ \Always(eot \supset \Wnext(eot)) \wedge \Always(eot \supset \lnot \textsf{berry}) \] 
  
 To control realizability, we can influence whether there is medicine along the trail. For the realizable variant, we  force that medication is available somewhere along the trace before the end:
 \[ \Wnext^{k - 3}(medication) \]

 For inbag, we have that it is true if either we have collected herbs in the last step (and there were herbs) or if we already had herbs and have not used them.
 \begin{align*}
 	\Next inbag \equiv \Next \top \wedge (herbs \wedge \Next collectMedication) \\ \wedge (inbag \wedge \lnot (takeMedication))
 \end{align*} 

\paragraph{Goal.} The agent's main goal consists of three parts:  The hiker wants to eat all non-poisonous berries that they cross during their hiking. Additionally, they want to reach the end of the trail. 
\begin{align*}
	\varphi_{ag} \coloneqq \Eventually (eot) \wedge \Always(berry \wedge \lnot poison \supset \Wnext(eat))
\end{align*} 
Lastly, for the initial state, we specify that the hiker is not sick and, of course, that it is not the end of the trace. 
\[ \lnot sick \wedge \lnot eot \]
\paragraph{Partitioning:}
For the partial observability variant, we set as unobservables $\mathcal X_{unr} = \{ poison \}$.

\section{Implementation Details}\label{apx:reprod}

The source code of the modified version of Syft that we use for our experiments is included in the supplementary material. 

Each program can be run using the command $\texttt{./Syft input.ltlf paritioning.part 0 <mode>}$. 
With \texttt{mode}, it is possible to select the approach by setting it to either \texttt{direct, belief, mso}, for the direct approach, the belief-state construction or the encoding into MSO, respectively.

\textbf{Input file format:}
It is important to notice that since our problem is different from standard \LTLf{}-synthesis, the layout of the input files is slightly changed.
Here, \texttt{input.ltlf} is a file containing two lines defining the main and backup \LTLf{}-formula, respectively. 

The formula itself is specified in Syft's LTLf syntax using $\texttt{N}$ for next, $\texttt{X}$ for Weak Next, $\texttt{G}$ for always, $\texttt{F}$ for eventually and usual symbols for propositional connectives.

\textbf{Partitioning file:}
The partition file contains the necessary information. It has the following format: 
\begin{lstlisting}
    .inputs: a b c 
    .outputs: x y z
    .unobservables: b c
\end{lstlisting}
This would correspond to a partitioning with $\mathcal X = \{ a, b, c \}, \mathcal Y = \{ x, y, z \}$ where $\mathcal X_{rel} = \{ a \}$ and $\mathcal X_{unr} = \{ b, c \}$.
Notice that we \textbf{require} each unreliable input variable to be listed \textbf{twice}: Once in the inputs and in the list of unobservables. 

\paragraph{Tests.}
The tests can be found in the \texttt{tests/} directory. Each test is in its own subdirectory and consists of a partitioning file \texttt{test.part}, a corresponding formula file \texttt{test.ltlf} and a file \texttt{expected} that either contains a 0 or 1, indicating whether the test is unrealizable or realizable, respectively.

The tests can be run using the python script \texttt{runTests.py} in the directory \texttt{scripts/runTests}. As an argument it takes the implementation type (i.e., mso, direct or belief), the other arguments can be viewed by calling it with \texttt{-h} to display the help.

\paragraph{Test Generation.}
The tests (for sheep, hiker and graph) can be generated using the Python scripts in the \texttt{generators/} folder.
The other test cases (i.e. \texttt{simpl-unrel}) are not auto-generated.
\subsection{Compilation.} 
Syft requires some libraries to be installed, this includes CUDD (vesion 3.0.0), Boost (v. 1.82) and MONA (used version MONA v1.4-18). The code was compiled using g++ (version 13.2.0). We have included the version of CUDD we used in the supplementary material (\texttt{extern/} folder), as different versions may lead to incompatibilities. 

There are detailed installation instructions for Ubuntu-based systems in the  \texttt{Syft/INSTALL} path of the archive.

\section{Supplementary figures}\label{apx:figures}
\subsection{Overhead}
To evaluate the overhead introduced by synthesis under unreliable inputs, we compared the runtime of our approach with the runtime required for separately synthesizing strategies for the main formula under full observability and the backup formula under partial observability.\footnote{This can be triggered using the command line option \texttt{-disregard=main/backup} in the testing script.}

In \Cref{fig:boxplot}, we present the ratio of our algorithm's performance relative to synthesizing only the main formula, only the backup formula under partial observability, or the sum of these two. A ratio of 1 indicates identical performance, while a ratio of 2 means our approach takes twice as long, and so forth. As shown in the figure, when compared with the sum of individual runtimes, our approach is at most 2.3 times slower, indicating a linear overhead. Furthermore, the mean and median ratios are both less than 1, with a standard deviation of 0.3.

We also provide a plot of the actual runtimes in \Cref{fig:overhead}, using the same color scheme for direct, belief-state, and MSO techniques as in the other figures.

\begin{figure*} 
    \centering
    \includegraphics[scale=0.5]{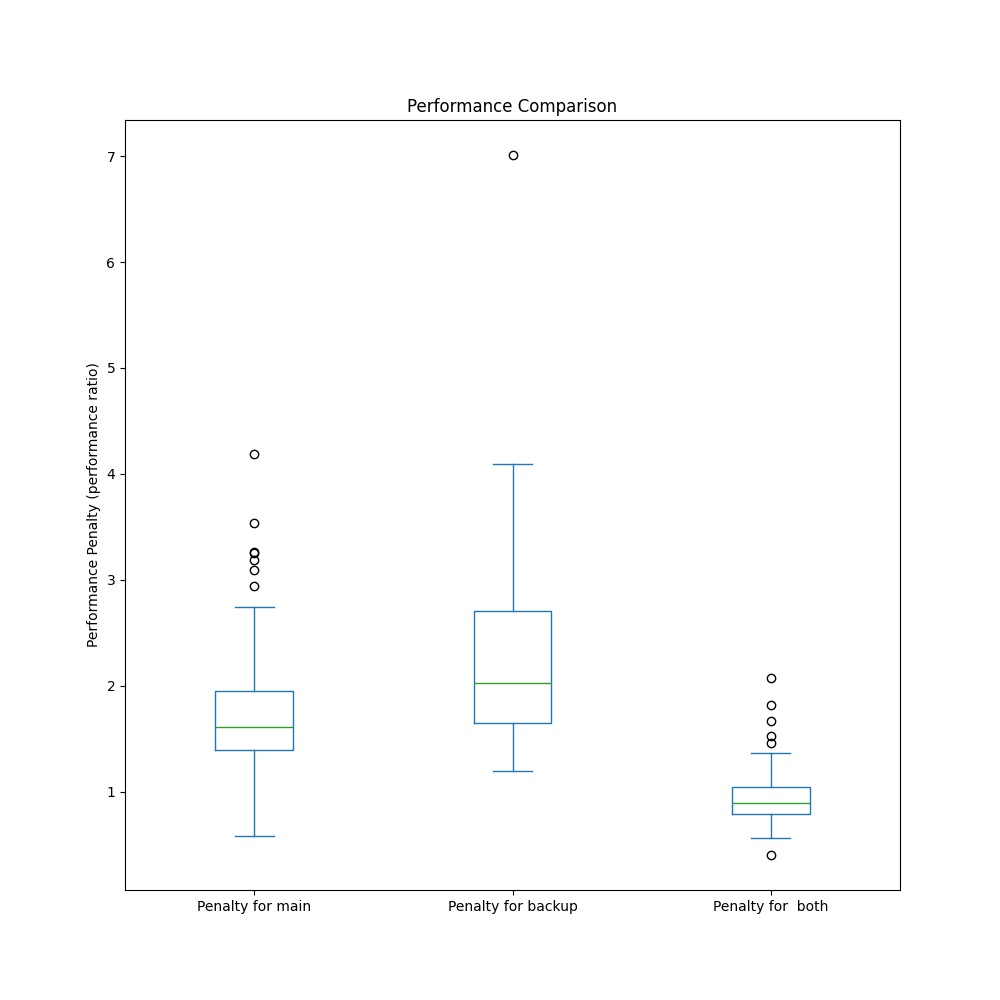}
    \caption{Boxplot of ratio of MSO solution for synthesis under unreliable input}
    \label{fig:boxplot}
\end{figure*}
\begin{figure*}
    \centering
    \includegraphics[scale=0.55]{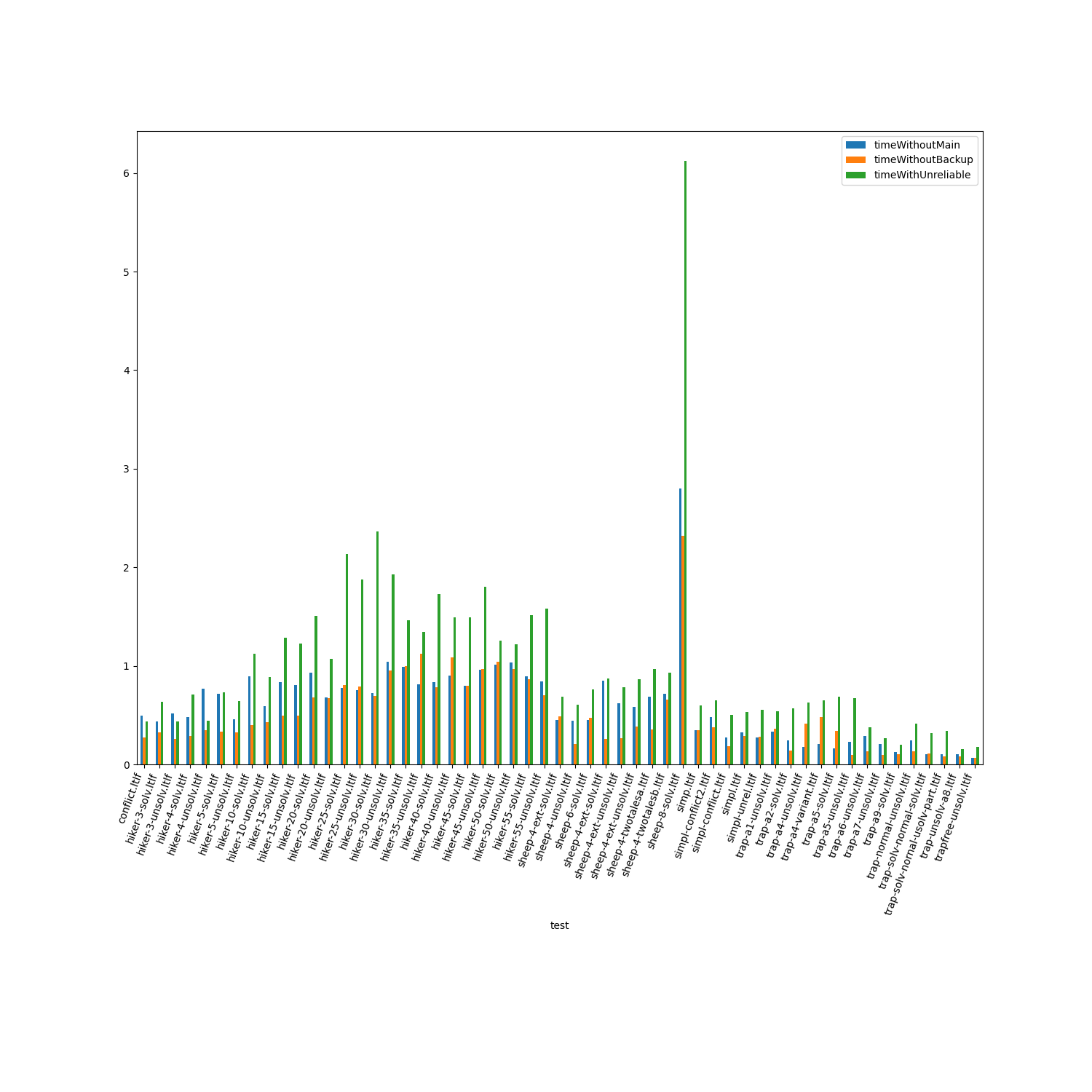}
    \caption{Overhead on tests}
    \label{fig:overhead}
\end{figure*}

\end{document}